\newtheorem{theorem}{Theorem}[section]
\newtheorem{definition}{Definition}
\newtheorem{lemma}[theorem]{Lemma}
\newtheorem{corollary}[theorem]{Corollary}
\newtheorem*{theorem*}{Theorem}
\newcommand{\naturals}{\mathbb{N}}
\definecolor{darkgreen}{rgb}{0,0.5,0}
\definecolor{purple}{rgb}{1,0,1}
\newcommand{\kibitz}[2]{\ifnum\Comments=1\textcolor{#1}{#2}\fi}
\newcommand{\Acal}{\mathcal{A}}
\newcommand{\Dcal}{\mathcal{D}}
\newcommand{\Fcal}{\mathcal{F}}
\newcommand{\Gcal}{\mathcal{G}}
\newcommand{\Hcal}{\mathcal{H}}
\newcommand{\Ncal}{\mathcal{N}}
\newcommand{\Mcal}{\mathcal{M}}
\newcommand{\Xcal}{\mathcal{X}}
\newcommand{\Ycal}{\mathcal{Y}}
\DeclareMathOperator*{\expect}{\mathbb{E}}
\newcommand{\prob}{\mathbb{P}}
\newcommand{\indicator}{\mathbbm{1}}
\newcommand*\samethanks[1][\value{footnote}]{\footnotemark[#1]}
\renewcommand{\epsilon}{\varepsilon}
\title{Smoothed Online Classification can be Harder than Batch Classification}
\author[ ]{Vinod Raman\thanks{Equal contribution}, Unique Subedi\samethanks, and Ambuj Tewari}
\affil[ ]{Department of Statistics, University of Michigan}
\affil[ ]{\texttt{\{vkraman, subedi, tewaria\}@umich.edu}}
\date{}
\begin{document}

\maketitle

\begin{abstract}
We study online classification under smoothed adversaries. In this setting, at each time point, the adversary draws an example from a distribution that has a bounded density with respect to a fixed base measure, which is known apriori to the learner. For binary classification and scalar-valued regression, previous works \citep{haghtalab2020smoothed, block2022smoothed} have shown that smoothed online learning is as easy as learning in the iid batch setting under PAC model. However, we show that smoothed online classification can be harder than the iid batch classification when the label space is unbounded. In particular, we construct a hypothesis class that is learnable in the iid batch setting under the PAC model but is not learnable under the smoothed online model. Finally, we identify a condition that ensures that the PAC learnability of a hypothesis class is sufficient for its smoothed online learnability.
\end{abstract}

\section{Introduction}
Classification is a canonical machine learning task where the goal is to classify examples in $\Xcal$ into one of the possible classes in $\Ycal$. There are two common classification settings based on how the data is available to the learner: batch and online. In the batch setting, the learner is provided with a fixed set of training samples that are used to train a classifier, which is then deployed to make predictions on new, real-world examples \citep{vapnik1971uniform, Natarajan1989}. On the other hand, data arrives sequentially in the online setting and predictions need to be made in each round \citep{Littlestone1987LearningQW, DanielyERMprinciple}. The batch setting is often studied under the iid assumption, whereas the stream can be fully adversarial in the online setting. 

For binary classification (i.e. $|\Ycal| = 2$), the batch learnability of a hypothesis class $\Hcal \subseteq \Ycal^{\Xcal}$ under the PAC model \citep{Valiant1984ATO} is characterized in terms of a combinatorial parameter called the Vapnik-Chervonenkis ($\operatorname{VC}$) dimension of $\Hcal$. On the other hand, the Littlestone dimension characterizes the learnability of $\Hcal$ under the adversarial online model \citep{Littlestone1987LearningQW, ben2009agnostic}. As \cite{haghtalab2020smoothed} remark, the latter characterization is often interpreted as an impossibility result because even simple classes like $1$-dimensional thresholds have infinite Littlestone dimension. This hardness result arises mainly because the adversary can deterministically choose hard examples, even possibly adapting to the learner's strategy. One way to circumvent this hardness result is to consider a smoothed online model, where the adversary has to choose and draw examples from sufficiently anti-concentrated distributions \citep{rakhlin2011online, haghtalab2018foundation, haghtalab2020smoothed, block2022smoothed}. This idea is inspired from the seminal work by \cite{spielman2004smoothed}, who showed that the smoothed analysis of the simplex method yields a polynomial time complexity in the input size, instead of the known worst-case exponential time complexity. 

In smoothed online classification, a learner plays a game with the adversary over $T \in \naturals$ rounds. Before the game begins, the adversary reveals a base measure $\mu$ over $\Xcal$ and an anti-concentration parameter $\sigma>0$ to the learner. The distribution $\mu$ can be fairly non-informative such as uniform when applicable for $\Xcal$. Then, in each round $t \in [T]$, the adversary picks a labeled sample $(x_t, y_t) \in \Xcal \times \Ycal$, where $x_t$ is drawn from a  $\sigma$-smooth distribution $\nu_t$ with respect to $\mu$. That is, $\nu_t(E) \leq \mu(E)/\sigma$ for all measurable subsets $E$ in $\Xcal$. The adversary then reveals $x_t$ to the learner, who makes a prediction $\hat{y}_t \in \Ycal$. Finally, the adversary reveals the true label $y_t \in \Ycal$ and the learner suffers the loss $\indicator\{\hat{y}_t \neq y_t\}$. Given a hypothesis class $\Hcal \subseteq \Ycal^{\Xcal}$, the goal of the learner is to minimize the regret, the difference between its cumulative loss and the best possible cumulative loss over hypotheses in $\mathcal{H}$.

The smoothed online model interpolates between the iid setting ($\sigma =1$) and the adversarial setting ($\sigma=0$). When $|\Ycal| =2$, \cite{haghtalab2018foundation} and \cite{haghtalab2020smoothed} showed that all VC classes are learnable in the smoothed setting with the regret $O(\sqrt{T \,\operatorname{VC}(\Hcal)\, \log{(T/\sigma)}})$. Extending this result to real-valued regression, where $\Ycal = [-1,1]$, with the absolute-value loss, \cite{block2022smoothed} showed that the finiteness of the fat-shattering dimension \citep{BARTLETT1996, alon1997scale} of $\Hcal \subseteq \Ycal^{\Xcal}$ is a sufficient condition for smoothed online learnability. Since the finiteness of $\operatorname{VC}$ and fat-shattering dimensions are characterizations of learnability under the PAC model, these results suggest that smoothed online learning may be as easy as batch learning.

 In this work, we study smoothed online classification for arbitrary label spaces $\Ycal$ under \emph{oblivious} adversary -- one that picks $\sigma$-smooth distirbutions $\nu_1, .., \nu_T $, samples $x_1 \sim \nu_1$, ..., $x_T \sim \nu_T$ independently, and finally picks the labels $y_1, ..., y_T$, all before the game begins. This obvlious model has been studied in the past \cite{wu2023expected}, but is slightly different than the one studied by \citep{haghtalab2018foundation, haghtalab2020smoothed, block2022smoothed} (see Section \ref{sec:smoothed_model} for more details). For this model, we show that smoothed online classification continues to be as easy as batch classification when $|\Ycal|<\infty$. However, when $|\Ycal|$ is not finite, we show that smoothed online classification can be \emph{harder} than batch classification. We note that there has been recent interest in studying multiclass learnability when $|\Ycal|$ is unbounded \citep{Brukhimetal2022, hanneke2023multiclass, pabbaraju24a}.

\begin{theorem*}\emph{(Informal)}
Let $\Xcal = [0,1]$. Then, there exists $\Hcal \subseteq \Ycal^{\Xcal}$ that is PAC learnable but not learnable in the smoothed online setting with $\mu = \text{Uniform}(\Xcal)$ and $\sigma =1$. 
\end{theorem*}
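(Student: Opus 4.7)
The plan is to construct a class $\Hcal$ with infinite label space $\Ycal$ that is PAC learnable but admits an oblivious smoothed adversary forcing linear expected regret. The conceptual driver is this: with $|\Ycal| = \infty$, a Bayes-optimal online learner's per-round success probability can be arbitrarily small (unlike the binary or finite-label case, where it is at least $1/|\Ycal|$), even when the hypothesis class is PAC learnable in the average sense. A natural candidate is a class of hypotheses whose non-zero labels are concentrated on a Lebesgue-null countable subset of $[0,1]$, indexed by a structured family such as restricted injections $\phi:\naturals \to [0,1]$ that label their support with distinct integers and output $0$ elsewhere. Under any distribution $D$, a PAC learner that memorizes observed $(x_i, y_i)$ with $y_i \neq 0$ and predicts $0$ elsewhere has error under $D$ equal to the missing mass of the hypothesis's support, which should be controlled uniformly via the DS-dimension characterization of multiclass PAC learnability.

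\textbf{Smoothed online unlearnability.} By Yao's minimax principle, it suffices to construct a distribution over oblivious adversaries forcing $\Omega(T)$ expected regret on every deterministic learner. The adversary draws $x_1, \ldots, x_T$ iid from $\mu = \text{Uniform}([0,1])$ and then picks $h^* \in \Hcal$ so that $(h^*(x_1), \ldots, h^*(x_T))$ is a uniformly random permutation of $T$ distinct labels in $\Ycal$. By symmetry, at round $t$, any learner's predicted label coincides with $h^*(x_t)$ with probability at most $1/(T - t + 1)$, since conditional on the history the remaining label is uniform over the $T-t+1$ values not yet revealed. Summing gives expected mistakes $T - H_T = \Omega(T)$, and since $h^*$ realizes the labels, the regret against the best hypothesis in hindsight is $\Omega(T)$.

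\textbf{Main obstacle.} The central technical difficulty is designing $\Hcal$ to satisfy both conditions simultaneously: (i) finite DS dimension, guaranteeing distribution-free PAC learnability, and (ii) enough flexibility to realize the random permutation labelings on almost every iid sample $(x_1,\ldots,x_T)$. The obvious construction indexing hypotheses by all injective maps $\phi: \naturals \to [0,1]$ is rich enough for the adversary's permutation strategy but has infinite DS dimension (one can pseudo-shatter arbitrary finite sets by using disjoint pairs of labels at each coordinate), so it is not PAC learnable. The successful construction must impose a subtle structural constraint -- such as coupling label values to positional information of the support -- that keeps the DS dimension bounded while still permitting enough permutation patterns on typical uniform $T$-tuples for the lower bound to go through. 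Getting this balance right is the heart of the proof.
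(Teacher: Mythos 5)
Your lower-bound mechanism is sound in spirit (the paper uses a closely related one: instead of a random permutation it hides a fresh uniform bit $\theta_t$ in the label at each round, so no learner can exceed success probability $1/2$, giving regret $\geq T/2$ by the probabilistic method), but the proposal has a genuine gap exactly where you flag it: you never exhibit a hypothesis class that is simultaneously PAC learnable and rich enough to realize the adversary's labelings on almost every iid sample. The injection-type class you sketch is, as you yourself observe, not PAC learnable, and the appeal to ``the DS-dimension characterization'' plus a memorization learner is both vague and, for any class rich enough for your permutation adversary, unavailable. Since constructing such a class is the entire content of the theorem, the proposal does not constitute a proof.

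The paper's resolution is a specific self-revealing construction you did not find: each hypothesis $h^{\theta}_{(x_1,\ldots,x_n)}$ is indexed by a finite ordered tuple of distinct points and a bitstring $\theta\in\{0,1\}^n$, and its label at $x_t$ is the pair $\bigl((x_1,\ldots,x_n),\theta_{\leq t}\bigr)$ (and $\bigl((x_1,\ldots,x_n),\star\bigr)$ off the tuple). Because every non-$\star$ label reveals the full tuple and a prefix of $\theta$, keeping the single example whose prefix is longest yields a sample compression scheme of size $1$, and ``compression implies learning'' gives agnostic PAC learnability --- note your realizable-case ``missing mass'' argument would not give the agnostic guarantee even if the class existed. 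Meanwhile, on an iid uniform sample $x_{1:T}$ (distinct a.s., so the adversary is $1$-smooth), the adversary sets $y_t=((x_1,\ldots,x_T),\theta_{\leq t})$ with $\theta$ uniform: the suffix bit $\theta_t$ is independent of the learner's information at round $t$, so each round costs at least $1/2$ in expectation while $h^{\theta}_{(x_1,\ldots,x_T)}$ has zero loss. This coupling of label values to the positional/identity information of the support is precisely the ``subtle structural constraint'' your obstacle paragraph asks for; without it, the argument does not go through.
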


\noindent  We also provide a quantitative version of this theorem that shows a regret lowerbound linear in $T$ even when $|\Ycal|< \infty$  but bigger than $2^{T\log(T)}$. Note that this is tight up to a factor of $\log{T}$ because we prove a sublinear upperbound as long as $|\Ycal| = 2^{o(T)}$ in Section \ref{sec:sufficiency}. To prove these results, we exploit the large size of the label space to construct a hypothesis class $\Hcal \subseteq \Ycal^{\Xcal}$ such that for every $h \in \Hcal$, its output on a finite subset of $\Xcal$ effectively reveals its identity. We then show that such a hypothesis class has a sample compression scheme of size $1$. Then, the result ``compression implies learning'' by \cite{david2016statistical} shows that $\Hcal$ is PAC learnable. However, we show that even the adversary that generates iid samples from $\text{Uniform}(\Xcal)$ can construct a difficult stream for the learner. Our construction is inspired by the hypothesis class from \cite[Claim 5.4] {hanneke2024trichotomy}. However,  a key challenge in our construction is the fact that the adversary does not have full control over the sequence of examples the learner will observe (due to $\sigma$-smoothness) whereas the adversary in \cite{hanneke2024trichotomy} can pick hard examples deterministically. 

%In order to discuss the sufficiency condition, let $\Ncal(\varepsilon, \Hcal, d_{\nu})$ denote the $\varepsilon$-covering number of $\Hcal$ with respect to metric $d_{\nu}(h_1, h_2) = \prob_{x\sim \nu}[h_1(x) \neq h_2(x)]$. Recall that the quantity $\log{\Ncal(\varepsilon, \Hcal, d_{\nu})}$ is called the $\varepsilon$-metric entropy of $\Hcal$. Let $\Pcal$ denote the set of all empirical measures on the support of $\mu$.  We show that uniformly bounded \textit{empirical} metric entropy of $\Hcal$ is sufficient for smoothed online learnability of $\Hcal$.
In light of this hardness result, we identify a sufficiency condition for smoothed online learnability. To do so, let $\operatorname{B}(\mu, \sigma)$ denote the set of all $\sigma$-smooth distributions with respect to $\mu$. 
For any $x_1, \ldots, x_n \in \Xcal$, let us define an empirical metric on  $d_{n}$ on $\Hcal$ as  $d_n(h_1, h_2) = n^{-1} \sum_{i=1}^n \indicator\{h_1(x_i) \neq h_2(x_i)\}$. Define $\Ncal(\varepsilon, \Hcal, d_n)$ to be the covering number of $\Hcal$ under metric $d_n$. Then, $\Hcal$ is said to have uniformly bounded expected empirical metric entropy (UBEME) if $ \sup_{n \in \mathbbm{N}} \sup_{\nu_{1:n} \in \operatorname{B}(\mu, \sigma)} \mathbbm{E}_{x_{1:n} \sim \nu_{1:n}}\left[ \mathcal{N}(\epsilon, \Hcal, d_{n})\right] <\infty$ for every fixed $\varepsilon >0$. We show that if $\Hcal$ has the UBEME property, then it online learnable under smoothed adversaries.

\begin{theorem*} \emph{(Informal)}
If  $\sup_{n \in \mathbbm{N}} \sup_{\nu_{1:n} \in \operatorname{B}(\mu, \sigma)} \mathbb{E}_{x_{1:n} \sim \nu_{1:n}}\left[ \mathcal{N}(\varepsilon, \Hcal, d_{n})\right] <\infty$ for every fixed $\varepsilon, \sigma >0$, then $\Hcal$ is  smoothed online learnable under the base measure $\mu$.
\end{theorem*}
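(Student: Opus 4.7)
The plan is to establish sublinear expected regret by analyzing an exponential-weights-style algorithm whose regret scales with the empirical covering number of $\mathcal{H}$. Concretely, for any $\varepsilon > 0$, I aim to show expected regret at most $O(\varepsilon T + \sqrt{T \log N(\varepsilon)})$, where $N(\varepsilon) := \sup_{T, \nu_{1:T} \in B(\mu, \sigma)} \mathbb{E}_{x_{1:T} \sim \nu_{1:T}}[\mathcal{N}(\varepsilon, \mathcal{H}, d_T)]$ is finite by UBEME and --- crucially --- does not grow with $T$. Choosing $\varepsilon = \varepsilon_T \downarrow 0$ slowly enough that $\log N(\varepsilon_T) = o(T)$ and $\varepsilon_T = o(1)$ then yields the desired $o(T)$ expected regret, which is exactly what is needed for smoothed online learnability.

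As a warm-up, consider the \emph{oracle} forecaster that, before the game begins, already knows the entire sample path $x_1, \ldots, x_T$ (but not the labels). This oracle measurably selects a minimum empirical $\varepsilon$-cover $\mathcal{C}(x_{1:T}) \subseteq \mathcal{H}$ of size $\mathcal{N}(\varepsilon, \mathcal{H}, d_T)$ and runs Hedge over $\mathcal{C}(x_{1:T})$ with uniform initialization. The standard Hedge analysis bounds the oracle's expected regret against the best $c \in \mathcal{C}(x_{1:T})$ by $O(\sqrt{T \log |\mathcal{C}(x_{1:T})|})$, and since $\mathcal{C}$ is an $\varepsilon$-cover under $d_T$, every $h \in \mathcal{H}$ has a representative $c \in \mathcal{C}$ with $\sum_t \indicator\{h(x_t) \neq c(x_t)\} \leq \varepsilon T$, so the regret against $\mathcal{H}$ exceeds the regret against $\mathcal{C}$ by at most $\varepsilon T$. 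Taking expectation over $x_{1:T}$ and applying Jensen's inequality twice (to pull $\mathbb{E}$ inside $\sqrt{\cdot}$ and then inside $\log$) together with UBEME yields the claimed bound $O(\sqrt{T \log N(\varepsilon)} + \varepsilon T)$ for the oracle.

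The main obstacle is converting this oracle forecaster, which depends on future samples through $\mathcal{C}(x_{1:T})$, into a genuine online algorithm that uses only past observations. My plan is to invoke the minimax-duality framework in the style of Rakhlin and Sridharan: since the adversary is oblivious (non-adaptively committing to $\nu_{1:T}$ and $y_{1:T}$ before the game begins), a measure-theoretic minimax theorem lets us interchange the $\inf_{\mathrm{alg}} \sup_{\mathrm{adv}}$ defining the minimax regret with $\sup_{\mathrm{adv}} \inf_{\mathrm{alg}}$. For a fixed oblivious adversary distribution, the inner infimum over online algorithms can then be bounded by a standard symmetrization argument in terms of the empirical Rademacher complexity of the induced $0$--$1$ loss class on the independent samples $x_1, \ldots, x_T$; Massart's finite-class lemma (or Dudley's entropy integral) in turn bounds this Rademacher complexity by $O(\sqrt{T \log \mathbb{E}[\mathcal{N}(\varepsilon, \mathcal{H}, d_T)]} + \varepsilon T) \leq O(\sqrt{T \log N(\varepsilon)} + \varepsilon T)$ via UBEME. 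Establishing this minimax-to-Rademacher reduction rigorously --- in particular, the measurability of the cover selector and the existence of a minimax-optimal oblivious adversary distribution --- will be the technical crux; once this is in place, tuning $\varepsilon_T$ slowly as a function of $T$ and applying UBEME completes the proof.
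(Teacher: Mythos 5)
There is a genuine gap, and it sits exactly where you locate the ``technical crux.'' Your plan is to pass to $\sup_{\mathrm{adv}}\inf_{\mathrm{alg}}$ and then bound the inner infimum, for a fixed oblivious adversary distribution, by a symmetrization argument applied to the $0$--$1$ loss class evaluated on $(x_1,y_1),\ldots,(x_T,y_T)$. But in the adversary model of this paper the labels $y_t$ may depend on the \emph{entire} realization $x_{1:T}$, so the pairs $(x_t,y_t)$ are not independent and the summands $\indicator\{h(x_t)\neq y_t\}$ are not functions of single independent coordinates. Standard symmetrization (and even bounded differences or Massart on a sample-dependent cover) simply does not apply: changing one $x_s$ can change every label, and one can easily write down label rules under which $\sum_t \indicator\{h(x_t)\neq y_t\}$ fails to concentrate for even a single fixed $h$ (e.g.\ $y_t$ equal to $h(x_t)$ or not according to a parity of the whole sample). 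Consequently the quantity your reduction needs to control, essentially $\inf_h \expect[\sum_t \indicator\{h(x_t)\neq y_t\}] - \expect[\inf_h\sum_t \indicator\{h(x_t)\neq y_t\}]$, cannot be bounded by an empirical Rademacher complexity of the loss class; the paper's own separation construction in Section \ref{sec:separation} is a vivid illustration of how badly the empirical minimizer can beat every fixed hypothesis when labels encode the sample. (The minimax interchange itself is a second, unaddressed burden --- compactness/semicontinuity for mixed strategies over label maps into an infinite $\Ycal$ --- but it is the symmetrization step that breaks.)

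The paper's proof avoids ever applying concentration to a label-dependent quantity, and this is the idea your proposal is missing. Its key step is Lemma \ref{lem:ubemeimpliesbme}: UBEME implies $\Ncal(\epsilon,\Hcal,d_\mu) \leq C_{\epsilon/2,\sigma}(\Hcal,\mu) < \infty$, i.e.\ a finite cover with respect to the \emph{base measure} $\mu$, hence a sample-independent expert set known before the game. Running randomized exponential weights over that fixed cover handles arbitrary label sequences pathwise (no concentration over labels is needed), and the only stochastic argument is the triangle-inequality残 term $\sup_{h}\inf_{h'}\sum_t \indicator\{h'(x_t)\neq h(x_t)\}$, which involves only the independent instances $x_{1:T}$; there symmetrization, the discretization bound, and the covering number of $\Hcal\Delta\Hcal$ apply legitimately, and $\sigma$-smoothness plus a change of measure converts the $d_\mu$-cover guarantee into a bound under the $\nu_t$'s. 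Your oracle warm-up with an empirical cover and Jensen is fine as far as it goes, but without the population-level cover (or some substitute mechanism that treats labels pathwise rather than statistically) the oracle-to-online conversion you sketch does not go through in this adversary model.
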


%\noindent \textbf{Theorem} \,

When $|\Ycal|=2$, Haussler's packing lemma implies that $\mathcal{N}(\varepsilon, \Hcal, d_{n}) \leq \left(41 \, \varepsilon^{-1}\right)^{\operatorname{VC}(\Hcal)}$ \citep{haussler1995sphere}. That is, every class with finite VC satisfies UBEME, and thus our sufficiency condition recovers the result from \cite{haghtalab2018foundation} on the smoothed learnability of VC classes. For $|\Ycal|<\infty$, we generalize the packing lemma to show that $\Ncal(\varepsilon, \Hcal, d_n) \leq (\frac{22\,  |\Ycal|}{ \varepsilon})^{\operatorname{G}(\Hcal)}$, where $\operatorname{G}(\Hcal)$ is the graph dimension of $\Hcal$. This inequality shows that PAC learnability of $\Hcal$ is sufficient for its smoothed online learnability when $|\Ycal|<\infty$.

A key contribution of our sufficiency result is 
going beyond $\text{VC}$ and Graph dimension and giving the weaker sufficiency condition than the worst-case empirical entropy.  Indeed, in Section \ref{sec:sufficiency}, we show that our sufficiency condition still provides meaningful upperbounds even when the VC and Graph dimensions are infinite. 
 To prove our sufficiency result, we show that UBEME implies the bounded metric entropy of $\Hcal$ with respect to the base measure $\mu$. That is, for $d_{\mu}(h_1, h_2) = \prob_{x \sim \mu}[h_1(x) \neq h_2(x)]$, we have $\mathcal{N}(\varepsilon, \Hcal, d_{\mu}) < \infty$ for every fixed $\varepsilon>0$. Then, we use algorithmic ideas from \cite{haghtalab2018foundation} that involve running multiplicative weights using the cover of $\Hcal$ under $d_{\mu}$. 
 % To prove this sufficiency result, we first show that UBEME implies the bounded metric entropy of $\Hcal$ with respect to the base measure $\mu$. That is, for $d_{\mu}(h_1, h_2) = \prob_{x \sim \mu}[h_1(x) \neq h_2(x)]$, we have $\mathcal{N}(\varepsilon, \Hcal, d_{\mu}) < \infty$ for every fixed $\varepsilon>0$. Then, we use algorithmic ideas from \cite{haghtalab2018foundation} that involve running multiplicative weights using the cover of $\Hcal$ under $d_{\mu}$. 
 Unfortunately, when $|\Ycal|$ is unbounded,  the sufficiency condition (the UBEME of $\Hcal$) is not necessary. This is demonstrated by constant functions $\Hcal = \{x \mapsto a : a \in \naturals\}$ that is easy to learn but $\mathcal{N}(\varepsilon, \Hcal, d_1)=\infty$.
 
Given our separation and sufficiency results, it is natural to ask for a characterization of learnability for smoothed online classification. Any meaningful characterization must be a joint property of both $\Hcal$ and $\mu$. This is because choosing $\mu$ to be a Dirac distribution will make any $\Hcal$, even the set of all measurable functions from $\Xcal$ to $\Ycal$, trivially learnable. Since the most natural joint complexity measure of $\Hcal$ and $\mu$ is $\mathcal{N}(\varepsilon, \Hcal, d_{\mu})$, one might ask whether $\mathcal{N}(\varepsilon, \Hcal, d_{\mu})<\infty$ for every  $\varepsilon>0$ is necessary and sufficient for smoothed online learnability of $\Hcal$ under $\mu$.  Surprisingly, we show that this condition is neither necessary nor sufficient (see Theorem \ref{thm:suffnornecc}). These results highlight the difficulty of characterizing learnability in the smoothed setting.

%Towards that end, we pose an open problem regarding the combinatorial characterization of smoothed online learnability for an arbitrary pair $(\Hcal, \mu)$.

%Finally, we discuss the connection between the smoothed online model and the classical distribution family variant of the PAC model introduced by \cite{benedek1991learnability} and subsequently studied in \citep{natarajan1992probably, dudley1994metric, kulkarni1997learning}. 

%These result highlights the difficulty of characterizing learnability in this setting. Towards that end, we pose an open problem regarding the existence of a combinatorial characterization of learnability of $\Hcal \subseteq \Ycal^{\Xcal}$ in the smoothed online model. 

\section{Preliminaries}
%\subsection{Notation}
Let $\Xcal$ denote the instance space and $\Ycal$ denote the label space. We assume that $(\Xcal, \Sigma)$ is a measurable space and let $\Pi(\Xcal)$ denote the set of all probability measures on $(\Xcal, \Sigma)$.  Let $\mathcal{H} \subseteq \mathcal{Y}^{\mathcal{X}}$ denote an arbitrary hypothesis class consisting of predictors $h: \mathcal{X} \rightarrow \mathcal{Y}$. For any $T \in \naturals$, we use the notation $ z_{1:T}$ to denote the sequence $\{z_{t}\}_{t=1}^T$. Finally, we let $[N] := \{1, 2, \ldots, N\}$. 

\subsection{Smoothed Online Learning}\label{sec:smoothed_model}
In the smoothed online model, an adversary plays a sequential game with the learner over $T$ rounds. Before the game begins, the adversary reveals a base measure $\mu \in \Pi(\Xcal)$ and a scalar $\sigma>0$ to the learner. As mentioned before, $\mu$ can be non-informative measures such as uniform if $\Xcal$ is totally-bounded. Then, in each round $t \in [T]$, an adversary picks a labeled sample $(x_t, y_t) \in \Xcal \times \Ycal$, where $x_t$ is drawn from a distribution $\nu_t \in \Pi(\Xcal)$ that satisfies $\nu_t(E) \leq \frac{\mu(E)}{\sigma}$ for every $ E \in \Sigma$. The adversary then reveals $x_t$ to the learner $\Acal$. Using all the past examples $(x_1, y_1), \ldots, (x_{t-1}, y_{t-1})$, the learner then makes a potentially randomized prediction $\Acal(x_t)$. The adversary then reveals the true label $y_t \in \Ycal$ and the learner suffers the loss $\indicator\{\Acal(x_t) \neq y_t\}$. Given a hypothesis class $\mathcal{H} \subseteq \mathcal{Y}^{\mathcal{X}}$, the goal of the learner is to output predictions $\Acal(x_t)$ that minimizes the regret, which is the difference between its cumulative loss and the best possible cumulative loss over hypotheses in $\mathcal{H}$. To formally define the regret, let  $\operatorname{B}(\mu, \sigma) := \{\nu \in \Pi(\Xcal) \, : \nu(E) \leq \mu(E)/\sigma \quad \forall E \in \Sigma  \}$ denote the set of all $\sigma$-smooth distributions on $\Xcal$ with respect to $\mu$. Given $\Hcal \subseteq \Ycal^{\Xcal}$, the worst-case expected regret of an algorithm $\Acal$ is defined as 
\[\operatorname{R}^{\mu, \sigma}_{\Acal}(T, \Hcal) := \sup_{\substack{\nu_1, \ldots, \nu_T \in  \operatorname{B}(\mu, \sigma) }}\, \expect_{x_{1:T} \sim \nu_{1:T}} \left[ \sup_{y_{1:T}} \left( \sum_{t=1}^{T} \expect_{\Acal}\, [\indicator\{\Acal(x_t) \neq y_t\}] - \inf_{h \in \Hcal}\sum_{t=1}^T \indicator\{h(x_t) \neq y_t\}\right) \right].\]
Note that as $\sigma \to 0$, the set $\operatorname{B}(\mu, \sigma)$ contains all Dirac distributions on $ \Xcal$. This amounts to replacing  $\sup_{\substack{\nu_1, \ldots, \nu_T \in \, \operatorname{B}(\mu, \sigma) }} \expect_{x_{1:T} \sim \nu_{1:T}} [\cdot] $ operator in the definition of regret above by $\sup_{x_{1:T}}$, which yields the expected regret of $\Acal$ in the fully adversarial model under an oblivious adversary. Thus, our adversary is a generalization of the online oblivious adversary for the smoothed setting. Given this definition of expected regret, we adopt the minimax perspective to define the learnability of a hypothesis class. 
\begin{definition}[Smoothed Online Learnability]\label{def:smoothed}
The class $\Hcal \subseteq \Ycal^{\Xcal}$ is learnable in the smoothed online setting if and only if for every $\sigma > 0$ and $\mu \in \Pi(\Xcal)$, we have 
\[\inf_{\Acal}\, \operatorname{R}^{\mu, \sigma}_{\Acal}(T, \Hcal) = o(T). \]
\end{definition}
Our worst-case expected regret is defined with respect to an \emph{oblivious} adversary that picks the entire stream $(x_1, y_1), \ldots, (x_T, y_T)$ before the game begins. Moreover, the sequence of distributions $\nu_1, \ldots, \nu_T$ has to be chosen upfront before the sampling step $(x_1, \ldots, x_T) \sim (\nu_1, \ldots, \nu_T)$. That is, the distribution $\nu_t$ cannot depend on the realization of previous instances $x_1, \ldots, x_{t-1}$. This ensures that the instances $x_1, \ldots, x_T$ are independent random variables. One can also consider an oblivious adversary, where $\nu_t$ can depend on the past instances $(x_1, \ldots, x_{t-1})$ sampled from $ (\nu_1, \ldots, \nu_{t-1})$.  Since the primary contribution of this work is the hardness result in Section \ref{sec:separation}, we only focus on the case where $\nu_1, \ldots, \nu_T$ are chosen upfront. As the first adversary is a special case of the second adversary, our hardness result also applies for the second adversary.
The fully general setting of adaptive adversaries where $\nu_t$ can depend on the \emph{entire} history of the game up to time point $t-1$ has been studied extensively in \citep{haghtalab2020smoothed, block2022smoothed}, where the dependence among $x_1, \ldots, x_T$ is handled through coupling.

There are three natural choices for how $y_t$ may be selected by an oblivious adversary. In the first choice, $y_t$ may depend only on the $x_t \sim \nu_t$. In the second choice, $y_t$ may depend on prefix on $x_1 \sim \nu_1, ..., x_t \sim \nu_t$. Finally, in the last choice, $y_t$ may depend on the entire sample $x_1 \sim \nu_1, ..., x_T \sim \nu_T.$ The first choice is considered in \cite{haghtalab2018foundation}and the second by \cite{haghtalab2020smoothed, block2022smoothed}. In this work, we focus on the third choice, which has been considered by \cite{wu2023expected}. This choice is natural because $\sigma$-smoothness is just a property of the instances and should not impact how the labels are selected. Since the third choice is the strongest, our sufficiency result in Section \ref{sec:sufficiency} holds for the first two choices. However, establishing the separation result for the first two choices remains an open question.

\subsection{PAC Learning and Sample Compression Schemes}
In contrast to existing work, we establish a separation between smoothed online learnability and batch learnability. The notion of batch learnability we consider is PAC learnability, a canonical model in statistical learning theory. See Appendix \ref{app:dim} for a complete definition. To prove the agnostic PAC learnability of hypothesis classes, we use the relationship between learnability and the existence of sample compression schemes.

 A compression scheme $(\kappa, \rho)$ consists of a compression function $\kappa$ and a reconstruction function $\rho$. The compression function $\kappa: \cup_{i \geq 1} (\Xcal \times \Ycal)^i \to  \cup_{i \geq 1} (\Xcal \times \Ycal)^i$ maps a sample $S = \{(x_1, y_1), \ldots, (x_n, y_n)\}$ to a subsample $S^{\prime} \subseteq S$. The reconstruction function $\rho : \cup_{i \geq 1} (\Xcal \times \Ycal)^i \to \Ycal^{\Xcal} $ takes $S^{\prime}$ as input and outputs a function $f \in \Ycal^{\Xcal}$.  We define the size of the compression scheme $(\kappa, \rho)$ on a sample $S$ to be $|S^{\prime}|$, where $\kappa(S) = S^{\prime}$. We let the quantity $k(n)$ denote the maximum size of the compression scheme on all samples $S$ such that $|S|=n$. A hypothesis class $\Hcal \subseteq \Ycal^{\Xcal}$ has a compression scheme $(\kappa, \rho)$ of size $k(n)$ if for every sample $S =\{(x_1, h(x_1)), \ldots, (x_n, h(x_n))\}$ for some $h \in \Hcal$, we have $f = \rho(\kappa(S))$ such that $f(x_i) = h(x_i)$ for all $i \in [n]$.

The compression scheme in \cite{david2016statistical} is slightly more general as their compression function $\kappa$ can output $(S^{\prime}, b)$ where $b$ is a finite bitstring. However, the restricted notion of a compression scheme without $b$ is sufficient for our purpose. The following Theorem shows that the existence of sample compression schemes for $\Hcal$ implies agnostic PAC learnability of $\Hcal$. 
 \begin{theorem}[Compression $\implies$ Learnability \citep{david2016statistical}]\label{thm:David2016statistical}
     Let $(\kappa, \rho)$ be a sample compression scheme for $\Hcal \subseteq \Ycal^{\Xcal}$ of size $ k(n)$ and define $f_S = \rho(\kappa(S))$ for any $S \in (\Xcal \times \Ycal)^n$. Then, for every $\Dcal$ on $\Xcal \times \Ycal$ and $n \in \naturals$ such that $k(n) \leq n/2$, with probability at least $1-\delta$ over $S \in \Dcal^n$, we have 
     \[\expect_{(x,y) \sim \Dcal}[\indicator\{f_S(x) \neq y\}] \leq \inf_{h \in \Hcal} \expect_{(x,y) \sim \Dcal}[\indicator\{h(x) \neq y\}] + 100 \sqrt{\frac{k(n) \log{\frac{n}{k(n)}} + k(n) + \log{\frac{1}{\delta}}}{n}}.\]
 \end{theorem}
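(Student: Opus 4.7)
The plan is to prove this via a union bound over all subsets of $[n]$ of size $k := k(n)$, combined with Hoeffding's inequality. The key observation is that for any fixed index set $I \subseteq [n]$ with $|I| = k$, the reconstructed hypothesis $f_I := \rho(\{(x_i, y_i) : i \in I\})$ depends only on the samples indexed by $I$ and is independent of $(x_j, y_j)_{j \notin I}$; this decoupling lets us apply standard concentration to a hypothesis that itself depends on the data.

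First I would fix $I$ and, conditional on $(x_i, y_i)_{i \in I}$, apply Hoeffding's inequality to the iid Bernoulli losses $\{\indicator\{f_I(x_j) \neq y_j\}\}_{j \notin I}$, whose common mean is $L_\Dcal(f_I) := \expect_{(x,y) \sim \Dcal}[\indicator\{f_I(x) \neq y\}]$. Union-bounding over all $\binom{n}{k} \leq (en/k)^k$ choices of $I$ with failure budget $\delta$, using $\log \binom{n}{k} \leq k \log(n/k) + k$, and invoking $n - k \geq n/2$ (from $k \leq n/2$) yields the uniform deviation: with probability at least $1 - \delta$,
$$L_\Dcal(f_I) \leq \frac{1}{n - k}\sum_{j \notin I} \indicator\{f_I(x_j) \neq y_j\} + c\sqrt{\frac{k \log(n/k) + k + \log(1/\delta)}{n}}$$
simultaneously for every $I$, where $c$ is a universal constant. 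Specializing to $I = \kappa(S)$ bounds $L_\Dcal(f_S)$ in terms of the empirical error of $f_S$ on the held-out coordinates, which is at most $\frac{n}{n-k} \leq 2$ times the full empirical error $\hat{L}_S(f_S) := \frac{1}{n}\sum_{i=1}^n \indicator\{f_S(x_i) \neq y_i\}$.

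The main obstacle, and the step I expect to be the most delicate, is converting this into an agnostic guarantee relative to $\inf_{h \in \Hcal} L_\Dcal(h)$. Here I would interpret the learner as outputting the empirical risk minimizer over the $\binom{n}{k}$ hypotheses induced by all size-$k$ subsets, so that $\hat{L}_S(f_S) \leq \hat{L}_S(g)$ for every subset-induced $g$; the theorem as stated implicitly requires this upgrade of the compression scheme into an agnostic learner. To construct a good competitor, fix $h^\star \in \argmin_{h \in \Hcal} L_\Dcal(h)$, form the relabeled sample $\tilde{S} := \{(x_i, h^\star(x_i))\}_{i=1}^n$ (which is realizable for $\Hcal$), and apply $(\kappa, \rho)$ to $\tilde{S}$. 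The resulting hypothesis $\tilde{f}$ is consistent with $h^\star$ on all of $S$, so $\hat{L}_S(\tilde{f})$ equals the empirical disagreement between $h^\star$ and the observed labels. Thus $\hat{L}_S(f_S) \leq \hat{L}_S(\tilde{f}) = \hat{L}_S(h^\star)$, and a final Hoeffding bound on the \emph{fixed} hypothesis $h^\star$ gives $\hat{L}_S(h^\star) \leq L_\Dcal(h^\star) + O(\sqrt{\log(1/\delta)/n})$. Chaining the two displays, absorbing all universal constants into the factor $100$, and adjusting the failure probability allocation produces the claimed bound.
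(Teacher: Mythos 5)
The paper does not prove this statement; it imports it from \citet{david2016statistical}, so the right comparison point is their argument: a uniform deviation bound for selection schemes (union bound over the at most $\binom{n}{k}$ compression sets drawn from $S$, each inducing a hypothesis that is independent of the held-out points, plus Hoeffding), combined with a realizable-to-agnostic reduction. Your first half follows exactly this route, and your observation that the theorem as literally stated needs $f_S$ to be reinterpreted (the compression scheme's definition only constrains realizable samples) is correct and important.

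The genuine gap is in your competitor step. You define the learner as the empirical risk minimizer over the hypotheses $\rho(S')$ for subsets $S' \subseteq S$ of size at most $k$, but your competitor $\tilde{f} = \rho(\kappa(\tilde{S}))$ is reconstructed from the relabeled sample $\tilde{S} = \{(x_i, h^\star(x_i))\}_{i=1}^n$, whose pairs are generally \emph{not} elements of $S$ (they differ from $(x_i,y_i)$ exactly where $h^\star$ errs). Hence $\tilde{f}$ need not lie in the candidate family your ERM minimizes over, and the key inequality $\hat{L}_S(f_S) \leq \hat{L}_S(\tilde{f})$ is unjustified; nor can you enlarge the candidate family to all relabelings of $k$ points, since with $|\Ycal|$ infinite (the regime this paper cares about) the union bound in your deviation step would no longer close. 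The standard fix, and the one in \citet{david2016statistical}, avoids relabeling: for any $g \in \Hcal$ let $S_g \subseteq S$ be the subsample on which $g$ agrees with the observed labels; $S_g$ is realizable, $\kappa(S_g)$ is a subset of $S$ itself, and $\rho(\kappa(S_g))$ is correct on all of $S_g$, so its empirical error on $S$ is at most $\hat{L}_S(g)$. Taking $g = h^\star$ and finishing with a Hoeffding bound for the fixed $h^\star$ gives the claim. A secondary, fixable slip: carrying the factor $\frac{n}{n-k} \leq 2$ multiplicatively, as you do, chains into $2\inf_{h}L_{\Dcal}(h)$ rather than the stated coefficient $1$; since the loss is $[0,1]$-valued you should instead use $\frac{n}{n-k}\hat{L}_S(f_S) \leq \hat{L}_S(f_S) + \frac{2k}{n}$ and absorb the additive $2k/n$ into the $100\sqrt{\cdot}$ term.
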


\subsection{Covering Numbers, Metric Entropy, and Complexity Measures}

In Section \ref{sec:sufficiency}, we provide conditions for which a hypothesis class $\Hcal$ is online learnable under smoothed adversaries. While sufficient conditions for learnability are typically established via combinatorial dimensions, our sufficient conditions will be in terms of covering/packing numbers of $\Hcal$ using a distance metric that depends on the base measure $\mu$. This discrepancy with existing literature is due to a simple observation: any parameter of $\Hcal$ alone cannot characterize smoothed online classification. Indeed, if one takes the base measure $\mu$ to be a Dirac measure, then every $\Hcal \subseteq \Ycal^{\Xcal}$ is trivially online learnable under a smoothed adversary. Accordingly, any meaningful characterization of smoothed online classification must be in terms of both $\Hcal$ and $\mu$. 

To start, we first define $\epsilon$-covering numbers for generic metric spaces $(\Gcal, d)$.

\begin{definition}[Covering Number] Let $(\Gcal, d)$ be a bounded metric space. A subset $\Gcal^{\prime} \subseteq \Gcal$ is an $\epsilon$-cover for $\Gcal$ with respect to $d$ if for every $g \in \Gcal$, there exists an $g^{\prime} \in \Gcal^{\prime}$ such that $d(g ,g^{\prime}) \leq \epsilon$. The covering number of $\Gcal$ at scale $\epsilon$, denoted $\mathcal{N}(\epsilon, \Gcal, d)$, is the smallest $n \in \mathbbm{N}$ such that there exists an $\epsilon$-cover of $\Gcal$ with cardinality $n$. That is, $\mathcal{N}(\epsilon, \Gcal, d) := \inf \{|\mathcal{G}^{\prime}| : \mathcal{G}^{\prime} \text{ is an } \epsilon\text{-cover for } \Gcal\}.$
\end{definition}

\noindent The metric entropy for $(\Gcal, d)$ at scale $\epsilon > 0$ is defined as $\log \mathcal{N}(\epsilon, \Gcal, d).$ In this paper, we consider the metric space $(\Hcal, d_{\mu})$ where $\Hcal \subseteq \Ycal^{\Xcal}$ is a hypothesis class and $d_{\mu}(h_1, h_2) = \mathbb{P}_{x \sim \mu}\left[h_1(x) \neq h_2(x)\right]$ for some $\mu \in \Pi(\Xcal).$ The key complexity measure in this work is
 $$C_{\epsilon, \sigma}(\Hcal, \mu) := \sup_{n \in \mathbb{N}}\,\, \sup_{\nu_{1:n} \in \operatorname{B}(\mu, \sigma)}\,\, \expect_{x_{1:n} \sim \nu_{1:n}}\left[ \mathcal{N}(\epsilon, \Hcal, d_{\hat{\mu}_n})\right],$$
\noindent where $\hat{\mu}_n$ denotes the \emph{empirical} measure over $x_{1:n}.$ At a high-level, $C_{\epsilon, \sigma}(\Hcal, \mu)$ measures the complexity of $\Hcal$ in terms of its average empirical covering number, where the average is taken over processes from $\operatorname{B}(\mu, \sigma)$. Using $C_{\epsilon, \sigma}(\Hcal, \mu)$, we define the property of uniformly bounded empirical metric entropy. 

\begin{definition}[Uniformly Bounded Empirical Metric Entropy]

 A hypothesis class $\mathcal{H} \subseteq \mathcal{Y}^{\Xcal}$ has the Uniformly Bounded Empirical Metric Entropy \emph{(UBEME)} property with respect to $\mu$ if $C_{\epsilon, \sigma}(\Hcal, \mu) < \infty$ for every $\epsilon, \sigma > 0$.
\end{definition}

In Theorem \ref{thm:suff}, we show that $\Hcal$ is online learnable under smoothed adversaries if $\Hcal$ enjoys the UBEME property with respect to the base measure $\mu$. 

\section{PAC Learnability is Not Sufficient for Smoothed Online Learnability}\label{sec:separation}

%\vinod{make a remark about mistake bound version of the proof}

In Section \ref{sec:sufficiency}, we show that the PAC learnability of $\Hcal$ is sufficient for its smoothed online learnability when $|\Ycal|<\infty$. Here, we show that this is not the case when $|\Ycal|$ is unbounded by constructing a PAC learnable hypothesis class that is not smoothed online learnable. In fact, we prove a \emph{stronger} result.

\begin{theorem}\label{thm:separation}
    There exists a hypothesis class $\Hcal \subseteq \Ycal^{\Xcal}$ such that following holds:
    \begin{itemize}
        \item[(i)]  $\Hcal$ has a compression scheme of size $1$.
        \item[(ii)] For $\mu = \text{Uniform}(\Xcal)$ and $\sigma=1$, we have $\inf_{\Acal}\, \operatorname{R}^{\mu, \sigma}_{\Acal}(T, \Hcal)  \geq \frac{T}{2}$.
    \end{itemize}
\end{theorem}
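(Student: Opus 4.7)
The strategy is to construct an explicit class $\Hcal \subseteq \Ycal^{\Xcal}$ with $\Xcal = [0,1]$ and an unbounded label space $\Ycal$, and verify the two claims of the theorem in turn. Following the hint in the introduction, I will adapt the construction of \cite[Claim 5.4]{hanneke2024trichotomy} to the smoothed setting. Concretely, take $\Hcal = \{h_\pi : \pi \in \Pi\} \cup \{h_\bot\}$ for an uncountable index set $\Pi$, with $\Ycal$ containing both encodings of every $\pi \in \Pi$ and a separate ``ambiguity-class'' alphabet. Each $h_\pi$ is engineered so that, on any realizable sample, a single distinguished ``signature'' sample point (when present) has label equal to an encoding of $\pi$, while labels at non-signature points take values in the ambiguity-class alphabet that are consistent with many $h_{\pi'}$ simultaneously; the default $h_\bot$ matches the ambiguity-class labels on the entire domain.

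\textbf{Part (i): compression of size one.} For a realizable sample $S = \{(x_i, h_\pi(x_i))\}_{i=1}^n$, let $\kappa(S)$ return the signature sample point of $h_\pi$ when it appears in $S$, and otherwise return an arbitrary point of $S$. Let $\rho(x, y)$ return $h_\pi$ if $y$ encodes some $\pi$, and otherwise return $h_\bot$. In both cases $\rho(\kappa(S))$ agrees with the true $h_\pi$ on all of $S$, yielding a size-$1$ sample compression scheme. Theorem~\ref{thm:David2016statistical} then implies that $\Hcal$ is agnostically PAC learnable, establishing claim (i).

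\textbf{Part (ii): smoothed online lower bound.} Take $\nu_t \equiv \mu = \operatorname{Uniform}([0,1])$, so that $x_1, \ldots, x_T$ are iid uniform. By Yao's minimax principle, it suffices to exhibit, for each realization of $x_{1:T}$, a probability distribution $\Dcal = \Dcal(x_{1:T})$ on $\Hcal$ such that every deterministic learner $\Acal$ satisfies
\[
\expect_{x_{1:T}}\; \expect_{h^\ast \sim \Dcal}\; \sum_{t=1}^{T} \indicator\{\Acal(x_{1:t-1}, y_{1:t-1}, x_t) \neq h^\ast(x_t)\} \;\geq\; \frac{T}{2},
\]
with $y_s := h^\ast(x_s)$. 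I will take $\Dcal$ to be supported on those $h_\pi$ whose signature point lies outside $\{x_1, \ldots, x_T\}$ -- a full-measure event since $\Pi$ is uncountable and each point of $[0,1]$ is the signature of at most one $\pi$. Conditional on this event, every observed $y_t$ is an ambiguity-class label, and the construction can be chosen so that, conditional on any history $(x_{1:t-1}, y_{1:t-1})$ and current $x_t$, the pushforward of $\Dcal$ under $h^\ast \mapsto h^\ast(x_t)$ places atomic mass at most $\tfrac12$ on any single element of $\Ycal$. Hence in every round the learner's instantaneous expected loss is at least $\tfrac12$, and summation gives total expected loss at least $T/2$. Since $y_{1:T}$ is realized by $h^\ast \in \Hcal$, the comparator $\inf_{h \in \Hcal}\sum_t \indicator\{h(x_t) \neq y_t\}$ vanishes, so the regret is bounded below by $T/2$.

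\textbf{Main obstacle.} The chief difficulty is reconciling (i) with (ii): compression of size $1$ seems to imply that a single observation pins down the hypothesis, which would trivialize the online game. The resolution is that the compression function $\kappa$ has full access to the sample and can locate the signature, whereas the online learner only sees samples one at a time and, under the adversary's strategy above, never encounters a signature point during the $T$ rounds. The delicate part will be to simultaneously design the ambiguity-class alphabet in $\Ycal$ and the parametrization $\Pi$ so that (a) the signature of the adversary's $h^\ast$ is almost surely outside $\{x_1, \ldots, x_T\}$ and (b) the pushforward mass condition above remains valid after the learner observes arbitrarily many ambiguity-class labels. Adapting Hanneke's construction, in which the adversary picks the $x_t$'s deterministically, to the smoothed setting where the $x_t$'s are iid uniform is the technical crux; the cardinality requirement on $\Ycal$ noted in the introduction (including $|\Ycal| > 2^{T \log T}$ in the finite-label quantitative version) confirms that the richness of $\Ycal$ is the essential ingredient that permits both properties to coexist.
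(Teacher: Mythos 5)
There is a genuine gap, and it sits exactly where you flag the ``main obstacle'': as sketched, the properties you need for part (i) contradict the properties you need for part (ii). For the size-$1$ compression to work when no signature point appears in the sample, your reconstruction returns $h_\bot$, which forces every $h_\pi$ to agree with $h_\bot$ at all non-signature points (your own phrase: ``$h_\bot$ matches the ambiguity-class labels on the entire domain''). But then, under your adversary, the signature of $h^\ast$ almost surely never appears among $x_1,\ldots,x_T$, so $y_t = h^\ast(x_t) = h_\bot(x_t)$ for every $t$; the learner that always predicts $h_\bot(x_t)$ makes zero mistakes, and the claimed ``pushforward places atomic mass at most $\tfrac12$ on any single label'' is impossible --- the conditional distribution of $h^\ast(x_t)$ is a point mass at $h_\bot(x_t)$. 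Conversely, if you make the off-signature labels depend on $\pi$ so that they carry fresh entropy each round (as the lower bound requires), then returning $h_\bot$ no longer reconstructs the sample and your compression argument collapses; you would have to redesign both the label alphabet and $\rho$, and also exhibit a single hypothesis realizing the entire unpredictable label sequence, none of which is specified. So the ``delicate part'' you defer is not a technicality: it is the whole content of the theorem, and the specific scaffolding you propose ($h_\pi$ with one signature point plus a universal $h_\bot$) cannot support it.

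The paper resolves the tension in a structurally different way that you may find instructive. Its hypotheses $h_{(x_1,\ldots,x_n)}^{\theta}$ output, at \emph{every} point of the domain, a label that contains the full tuple $(x_1,\ldots,x_n)$ --- either $((x_1,\ldots,x_n),\theta_{\le t})$ on the $t$-th tuple point or $((x_1,\ldots,x_n),\star)$ off the tuple --- so any single labeled example already reveals the hypothesis's tuple, and size-$1$ compression follows by keeping the example with the longest revealed prefix of $\theta$. The lower bound then does \emph{not} hide the signature set: the adversary draws $x_{1:T}$ iid uniform, sets the comparator's tuple to be the realized sample itself (legitimate because the oblivious adversary fixes labels after seeing all of $x_{1:T}$), and sets $y_t = ((x_{1:T}),\theta_{\le t})$ with $\theta\sim\mathrm{Uniform}(\{0,1\}^T)$; the learner errs with probability $\tfrac12$ each round because $\theta_t$ is a fresh independent bit, while $h_{(x_{1:T})}^{\theta}$ has zero loss. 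In short, unpredictability comes from a per-round fresh bit appended to an otherwise fully revealed identity, not from keeping the identity hidden --- which is precisely what lets compression and hardness coexist.
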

The part (i) of Theorem \ref{thm:separation}, together with Theorem \ref{thm:David2016statistical}, shows that $\Hcal$ is agnostic PAC learnable with error rate $\varepsilon(\delta, n) = O\left( \sqrt{\frac{\log{n} \,+ \, \log(1/\delta)}{n}} \right)$. On the other hand, based on Definition \ref{def:smoothed}, part (ii) shows that $\Hcal$ is not smoothed online learnable. Together, we infer the following corollary.
% \noindent Given Theorem \ref{thm:separation}, we immediately infer the following corollary upon using the definition of smoothed online learnability (Definition \ref{def:smoothed}) and recalling that any hypothesis class that allows compression size $1$ is agnostic PAC learnable with error rate $O\left( \sqrt{\frac{\log{n} \,+ \, \log(1/\delta)}{n}} \right)$ (Theorem \ref{thm:David2016statistical}).
\begin{corollary}[Agnostic PAC Learnability $\nRightarrow $ Smoothed Online Learnability]\label{cor:separation} There exists $\Hcal \subseteq \Ycal^{\Xcal}$ such that $\Hcal$ is agnostic PAC learnable but not learnable in the smoothed setting under $\mu = \text{Uniform}(\Xcal)$ and $\sigma=1$.
\end{corollary}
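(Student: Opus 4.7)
The plan is to construct a single hypothesis class whose labels carry enough redundancy that any single labeled example identifies the hypothesis on the training set (yielding compression of size $1$), yet are structured so that each new round reveals exactly one fresh, adversarially-chosen bit of information (yielding linear regret). Concretely, take $\mathcal{Y} = \{\star\} \cup \bigcup_{i \geq 1}\bigl([0,1]^{\mathbb{N}} \times \{0,1\}^i\bigr)$ and, for every injective $\tau = (\tau_1, \tau_2, \ldots) \in [0,1]^{\mathbb{N}}$ and every $b = (b_1, b_2, \ldots) \in \{0,1\}^{\mathbb{N}}$, define $h_{\tau, b}$ by $h_{\tau, b}(\tau_i) = (\tau, b_{1:i})$ for every $i \geq 1$ and $h_{\tau, b}(x) = \star$ otherwise; let $\mathcal{H}$ be the collection of all such hypotheses.

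For part (i), given a realizable sample $S$ of some $h_{\tau, b}$, if every label in $S$ equals $\star$, then $\kappa$ returns any sample point and $\rho$ outputs the constant-$\star$ function. Otherwise let $j^{*} := \max\{j : \tau_j \in S\}$ and set $\kappa(S) = (\tau_{j^{*}}, (\tau, b_{1:j^{*}}))$. The reconstruction $\rho$ parses the compressed label as $(x, (\tau, c))$ with $|c| = j^{*}$, recovers $\tau$, and outputs $f$ defined by $f(\tau_j) = (\tau, c_{1:j})$ for $j \leq j^{*}$ and $f(y) = \star$ elsewhere. A case-by-case check — every $\tau_j \in S$ has $j \leq j^{*}$ by maximality, and every out-of-$\tau$ point of $S$ carries $\star$ — shows $f|_S = h_{\tau, b}|_S$. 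Hence $\mathcal{H}$ admits a compression scheme of size $1$, and Theorem~\ref{thm:David2016statistical} then gives agnostic PAC learnability with the claimed rate.

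For part (ii), observe that $\sigma = 1$ forces every smooth distribution to equal $\mu = \mathrm{Uniform}([0,1])$, so $x_1, \ldots, x_T$ are iid uniform and almost surely distinct. After observing $x_{1:T}$, the oblivious adversary aligns the hypothesis post hoc by setting $\tau_i = x_i$ for $i \leq T$ (extending to an injection using any distinct reals in $[0,1] \setminus \{x_1, \ldots, x_T\}$) and picks the bits $b_1, \ldots, b_T$ round by round; the labels are $y_t = (\tau, b_{1:t})$. The crux is that $y_1$ already reveals the entire $\tau$ and each subsequent $y_s$ exposes exactly one new bit $b_s$, so after rounds $1, \ldots, t-1$ the only uncertainty in $y_t$ is the single bit $b_t$. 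Since the learner's randomized prediction at round $t$ is a function of $(x_{1:t}, y_{1:t-1})$ — both of which are deterministic functions of $x_{1:T}$ under the adversary's strategy — the oblivious adversary can precompute the learner's prediction distribution and take $b_t \in \argmin_{b \in \{0,1\}} \Pr[\mathcal{A}(x_t) = (\tau, b_{1:t-1}, b)]$. Because the two candidate probabilities sum to at most $1$, the chosen one is at most $1/2$, yielding $\mathbb{E}[\indicator\{\mathcal{A}(x_t) \neq y_t\}] \geq 1/2$ for every $t$. Summing over $t$ and noting that $h_{\tau, b}$ itself has zero loss gives the regret lower bound $T/2$.

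The main obstacle is engineering labels that balance these competing requirements. Putting $\tau$ inside every label is what makes compression of size $1$ work, but naively doing only that would let the learner deduce everything in round $1$; conversely, any construction like Hanneke's Claim~5.4 that spreads information across multiple labels preserves online hardness but destroys size-$1$ compression. Appending the prefix $b_{1:i}$ resolves this tension: the full $\tau$ remains recoverable from any single informative sample, yet each round exposes only the next bit $b_t$ because the alignment $\tau_i = x_i$ means position $t$ of $\tau$ is first visited at round $t$. A minor technical point is to check that the adversary's per-round choice of $b_t$ really is oblivious in the sense of Section~\ref{sec:smoothed_model} — namely that it can be computed from $x_{1:T}$ and $\mathcal{A}$ alone — which holds because the learner's round-$t$ prediction distribution is a deterministic function of $x_{1:T}$.
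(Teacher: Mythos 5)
Your proposal is correct and follows essentially the same route as the paper: a hypothesis class whose labels encode the instance sequence together with a prefix of a hidden bitstring, a size-$1$ compression scheme exploiting the longest revealed prefix, and an iid-uniform ($\sigma=1$) adversary that aligns the sequence with the realized sample so the learner suffers expected loss at least $1/2$ per round while the aligned hypothesis has zero loss. The only deviations are cosmetic: you index hypotheses by a single infinite sequence rather than a union over finite lengths, use a bare $\star$ off-sequence label (permissible since the reconstruction need not lie in $\Hcal$), and choose each bit deterministically via an $\argmin$ over the learner's prediction distribution instead of the paper's uniformly random bits combined with the probabilistic method.
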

When $|\Ycal|<\infty$, the existence of a $O(1)$-size compression schemes and agnostic PAC learnability are equivalent \citep{david2016statistical}. Thus, there is no qualitative difference between Theorem \ref{thm:separation} and Corollary \ref{cor:separation}. However, when $|\Ycal|=\infty$, a recent work 
has shown that the multiclass PAC learnability does not imply the existence of $O(1)$-size compression schemes \citep{pabbaraju24a}. Thus, Theorem \ref{thm:separation} is a qualitatively stronger result than Corollary \ref{cor:separation}. Moreover, our proof of Theorem \ref{thm:separation} below provides an explicit PAC learner for $\Hcal$.

\begin{proof}(of Theorem \ref{thm:separation}) Let $\Xcal = [0,1]$. Given a bitstring $\theta = (\theta_1, \ldots, \theta_n) \in \{0,1\}^n$, define $\theta_{\leq t} := (\theta_1, \ldots, \theta_t) $ and $\theta_{<t} := (\theta_1, \ldots, \theta_{t-1})$ for any $t \in \{1, \ldots, n\}$. Fix an $n \in \naturals$ and ordered sequence $(x_1, x_2, \ldots, x_n) \in \Xcal^{n}$ such that $x_i \neq x_j$ for all $i \neq j$. For every $\theta \in \{0,1\}^n$, define 
\[h_{(x_1, \ldots, x_n) }^{\theta}(x) := \begin{cases} ((x_1, \ldots, x_n), \theta_{\leq t}) \quad \text{ if } \exists t \in [n] \text{ such that } x= x_t %x = x_t \in (x_1, \ldots, x_n)
\\
((x_1, \ldots, x_n), \star) \quad \quad  \text{ otherwise }
    
\end{cases}\]
% Define 
% \[\Hcal_{(x_1, x_2, \ldots, x_n)} = \left\{h_{(x_1, \ldots, x_n) }^{\sigma} \, :\, \sigma \in \{0,1\}^n \right \}.\]
 Let $O_n := \{(x_1, x_2, \ldots, x_n) \in \Xcal^{n} \, :\, x_i \neq x_j\}$ be the set of all ordered sequences of length $n$ with distinct elements. Then, we define our hypothesis class to be 
 \[\Hcal := \bigcup_{n \in \naturals} \,\, \bigcup_{(x_1, \ldots, x_n ) \in O_n} \,\, \bigcup_{\theta \in \{0,1\}^n} \left\{h_{(x_1, \ldots, x_n) }^{\theta} \right\}. \]
 Here, the label space is $\Ycal := \cup_{h \in \Hcal} \{\text{image}(h)\}$. For any $y \in \Ycal$, let us define $y[1]$ and $y[2]$ to be the first and the second entry of the tuple $y$ respectively. Note that $y[1] \in O_m$ for some $m \in \naturals$ and $y[2] \in \{0,1\}^t$ for some $t \leq m$.

\noindent \textbf{Proof of (i).}  We now define a compression scheme $(\kappa, \rho)$ of size $1$ for $\Hcal$. 
\begin{itemize}
    \item Define a compression function $\kappa: \cup_{i \geq 1} (\Xcal \times \Ycal)^i \to \Xcal \times \Ycal$ as follows. Given any realizable sample $S = \{(x_1,y_1), \ldots, (x_{n}, y_{n})\}$ of size $n \in \naturals$, the function $\kappa$ outputs $\kappa(S) = \{(x_1, y_1)\} $ if $ y_i[2] = \star$ for all $i \in [n]$. On the other hand, if there exists a $y_i$ such that $y_i[2] \in \{0,1\}^{t}$ for some $t \in \naturals$, then $\kappa(S) = \{(x_{\ell}, y_{\ell})\}$. Here, $\ell \in [n]$ is the index such that $y_{\ell}[2]$ is the longest binary string among all $i \in [n]$ for which $y_i[2] \neq \star$.
    \item Define a reconstruction function $\rho: \Xcal \times \Ycal \to \Hcal$ as follows. Given an output of the compression function  $\{(x, y)\}$, the reconstruction function outputs $\rho(\{(x, y)\}) = h_{y[1]}^{\bf{0}} \in \Hcal$  if $y[2] = \star$. Here, $\mathbf{0}$ is all $0$'s bitstring of length equal to that of the tuple $y[1]$. On the other hand, if $y[2] \in \{0,1\}^{t}$ for some $t \leq |y[1]|$, then output  $\rho(\{(x, y)\}) = h_{y[1]}^{\theta} \in \Hcal$, where $\theta$ is an arbitrary bitstring of length $y[1]$ such that $\theta_{\leq t} = y[2]$.
\end{itemize}
Next, we show that $(\kappa, \rho)$ is a valid compression scheme for $\Hcal$.  Let $S = \{(x_1,y_1), \ldots, (x_n, y_n)\} \in (\Xcal \times \Ycal)^n$ denote any sample of size $n$ that is realizable by $\Hcal$. We want to show that $f = \rho(\kappa(S))$ satisfies $f(x_i) = y_i$ for all $i \in [n]$. Since $S$ is realizable by $\Hcal$,  there exists $m \in \naturals$, $(z_1, \ldots, z_m) \in \Xcal^m$ such that $z_i \neq z_j$ for all $i,j \in [m]$ and $\beta \in \{0,1\}^m$ such that $h_{(z_1, \ldots, z_m)}^{\beta}(x_i) = y_i \quad \text{for all } i \in [n].$
By definition of $\Hcal$, for all $i \in [n]$, we have $y_i[1] = (z_1, \ldots, z_m)$  and $y_i[2]$ $ \in \{\star, \beta_{\leq t}\}$ for some $t \leq m$. Given a realizable sample $S$, there are two cases to consider: (a) $y_i[2] = \star$ for all $i \in [n]$ and (b) there exists $i \in [n]$ such that $y_i[2]\neq \star$. 

If we are in case (a), then we know that $ x_i \notin \{z_1, \ldots, z_m\}$ for all $i \in [n]$. Moreover, we have $\kappa(S) = \{(x_1, y_1)\}$ where $y_1[1] = (z_1, \ldots, z_m)$ and $y_1[2] = \star$. By definition of the reconstruction function,  $\rho(\{(x_1, y_1)\}) = h_{(z_1, \ldots, z_m)}^{\mathbf{0}}$.  Since $x_i \notin \{z_1, \ldots, z_m\}$, by definition of $ h_{(z_1, \ldots, z_m)}^{\mathbf{0}}$, we have $ h_{(z_1, \ldots, z_m)}^{\mathbf{0}}(x_i) = \big((z_1, \ldots, z_m), \star \big) = y_i$ for all $i \in [n]$. Thus, $(\kappa, \rho)$ is a valid compression scheme for $\Hcal$ in this case.

Suppose (b) is true. Define $I_S := \{ i \in [n] : y_i[2] \neq \star\}$. Since $h_{(z_1, \ldots, z_m)}^{\beta}$ is consistent with the sample $S$,  we must have $y_{i}[2]= \beta_{\leq |y_{i}[2]|}$ for each $i \in I_S$. Here, $|y_i[2]|$ is the length of bitstring $y_i[2]$. Let $\ell \in I_S$ such that $|y_{\ell}[2]| \geq |y_i[2]|$ for all $i \in I_S$. By definition of the compression function $\kappa$, we have $\kappa(S) = \{(x_{\ell}, y_{\ell})\}$, where $y_{\ell}[1] = (z_1, \ldots, z_m)$ and $ y_{\ell}[2] = \beta_{\leq |y_{\ell}[2]|}$.  Let $\theta \in \{0,1\}^m$ be the completion of $y_{\ell}[2]$ such that the reconstruction function returns $ \rho(\{(x_{\ell}, y_{\ell})\}) = h_{(z_1, \ldots, z_m)}^{\theta}$. By definition of $\rho$, we have $\beta_{\leq t} = \theta_{\leq t}$ for $t = |y_{\ell}[2]|$.  To complete our proof, it suffices to show that $h_{(z_1, \ldots, z_m)}^{\theta}(x_i) = y_i$ for all $i \in [n]$. There are two cases to consider: $i \in I_S$ and $i \notin I_S$. When $i \notin I_S$, we have $x_i \notin \{z_1, \ldots, z_m\}$ and thus
$h_{(z_1, \ldots, z_m)}^{\theta}(x_i) = ((z_1, \ldots, z_m), \star) =y_i$.
As for the index $i \in I_S$, we must have that $x_i \in \{z_1, \ldots, z_m\}$. Otherwise, $y_i[2]$ would be equal to $\star$, contradicting the fact that $i \in I_S$. In fact, by definition of $h_{(z_1, \ldots, z_m)}^{\beta}$, we have $x_i = z_{|y_i[2]|}$. This implies that, for all $i \in I_S$, we have $h_{(z_1, \ldots, z_m)}^{\theta}(x_i) = h_{(z_1, \ldots, z_m)}^{\theta}(z_{|y_i[2]|}) = ((z_1, \ldots, z_m), \theta_{\leq |y_i[2]|})  = ((z_1, \ldots, z_m), \beta_{\leq |y_i[2]|}) = y_i.$ Here, we use that $|y_i[2]| \leq |y_{\ell}[2]|$ for all $i \in I_S$ and the fact that $\theta_{\leq t} = \beta_{\leq t} $ for all $t \leq |y_{\ell}[2]|$.  Therefore, we have shown that $h_{(z_1, \ldots, z_m)}^{\theta}(x_i) = y_i $ for all $i \in [n]$. 

\noindent \textbf{Proof of (ii).} Let $\mu = \text{Uniform}([0,1])$. We now specify the stream $\{(x_t, y_t)\}_{t=1}^T$ to be observed by the learner. For the instances, we take $x_1, \ldots, x_T \sim \mu$ to be iid samples from $\mu$. Note that $x_1, \ldots, x_T$ are distinct with probability $1$.  Moreover, as all the instances are drawn from the same distribution $\mu$, this adversary is $\sigma$-smooth for $\sigma =1$. To specify $y_1, \ldots, y_T$, we first draw $\theta \sim \text{Uniform}(\{0,1\}^{T})$ and define $y_{t} = ((x_1, \ldots, x_T), \theta_{\leq t})$ for all $t \in [T]$. Given distinct $x_1, \ldots, x_T$, for any algorithm $\Acal$, we first show that 
\begin{equation}\label{eqn:separation}
    \expect_{\theta \sim  \text{Uniform}(\{0,1\}^{T})} \left( \sum_{t=1}^{T} \expect_{\Acal}\, [\indicator\{\Acal(x_t) \neq y_t\}] - \inf_{h \in \Hcal}\sum_{t=1}^T \indicator\{h(x_t) \neq y_t\}\right) \geq \frac{T}{2}. 
\end{equation}
The probabilistic method implies the existence of a $\theta \in \{0,1\}^T$ such that the claimed bound of $T/2$ holds. This subsequently implies that, for a distinct $x_1, \ldots, x_T$, we have
\[\sup_{y_{1:T}} \left( \sum_{t=1}^{T} \expect_{\Acal}\, [\indicator\{\Acal(x_t) \neq y_t\}] - \inf_{h \in \Hcal}\sum_{t=1}^T \indicator\{h(x_t) \neq y_t\}\right) \geq \frac{T}{2}.\]
Finally, using the fact that $x_1, \ldots, x_T \sim \mu$ are distinct with probability $1$, we obtain the bound
\[\inf_{\Acal}\, \operatorname{R}^{\mu, \sigma}_{\Acal}(T, \Hcal) \geq \expect_{x_1, \ldots, x_T \sim \mu} \left[ \sup_{y_{1:T}} \left( \sum_{t=1}^{T} \expect_{\Acal}\, [\indicator\{\Acal(x_t) \neq y_t\}] - \inf_{h \in \Hcal}\sum_{t=1}^T \indicator\{h(x_t) \neq y_t\}\right) \right] \geq \frac{T}{2}.\]

To complete our proof, it now suffices to prove Equation \eqref{eqn:separation}. Fixing 
 distinct $x_1, \ldots, x_T$, the lowerbound on the expected cumulative loss of the algorithm $\Acal$ is 
\begin{equation*}
\begin{split}
     \sum_{t=1}^{T} \expect_{\theta \sim  \text{Unif}(\{0,1\}^{T})} \left[\expect_{\Acal}\, [\indicator\{\Acal(x_t) \neq y_t\}] \right]
        &= \sum_{t=1}^T \expect \left[ \expect _{\theta_t} \big[\,\indicator\{\Acal(x_t) \neq ((x_1, \ldots, x_T), \theta_{\leq t})\} \big] \mid \Acal, \theta_{<t}\right]\\
        &=  \sum_{t=1}^T \expect \left[ \frac{1}{2}\indicator\left\{\Acal(x_t) \neq \big((x_{1:T}), (\theta_{<t}, 0) \big) \right\} + \frac{1}{2}\indicator \left\{\Acal(x_t) \neq \big(x_{1:T}, (\theta_{<t}, 1) \big) \right\}  \right]\\
        &\geq \sum_{t=1}^T \frac{1}{2} = \frac{T}{2}.
\end{split}
\end{equation*}
At a high-level, we use the fact that $\theta_t$ is sampled uniformly at random from $\{0,1\}$ and is independent of $\Acal$ as well as $\theta_{i}$ for all $i \neq t$. Thus, on each round, the algorithm cannot do any better than randomly guessing the value of $\theta_t$. Next, we upperbound the expected loss of the best-fixed function in hindsight. Given distinct $x_1, \ldots, x_T$ and $\theta \in \{0,1\}^T$, we can pick the hypothesis $h_{(x_1, \ldots, x_T)}^{\theta}$.  By definition of this hypothesis, we have $h_{(x_1, \ldots, x_T)}^{\theta}(x_t) = ((x_1, \ldots, x_T), \theta_{\leq t}) = y_t.$
Thus, for every distinct $x_1, \ldots, x_T$, we have
\[\expect_{\theta \sim  \text{Unif}(\{0,1\}^{T})} \left[\inf_{h \in \Hcal}\sum_{t=1}^T \indicator\{h(x_t) \neq y_t\} \right] \leq \expect_{\theta \sim  \text{Unif}(\{0,1\}^{T})} \left[\sum_{t=1}^T \indicator\{h_{(x_1, \ldots, x_T)}^{\theta}(x_t) \neq y_t\} \right] = 0. \]
Finally, equation \eqref{eqn:separation} follows upon combining the lowerbound on the cumulative loss of $\Acal$ and the upperbound on the cumulative loss of the optimal hypothesis in hindsight. 
% Therefore, we have shown 
% \[\expect_{\theta \sim  \text{Uniform}(\{0,1\}^{T})} \left( \sum_{t=1}^{T} \expect_{\Acal}\, [\indicator\{\Acal(x_t) \neq y_t\}] - \inf_{h \in \Hcal}\sum_{t=1}^T \indicator\{h(x_t) \neq y_t\}\right) \geq \frac{T}{2}-0 = \frac{T}{2},\]
% which establishes Equation \eqref{eqn:separation}. This completes our proof of (ii). 
\end{proof}

% \noindent \textbf{Remark.} We note that our choice of hypothesis class in the proof of Theorem \ref{thm:separation} can be simplified if we just want to characterize the minimax regret instead of the learnability of the hypothesis class. For the characterization of minimax regret, the adversary may select $\Xcal, \Ycal,$ and $\Hcal$ based on time horizon $T$. We now sketch out the proof of Theorem \ref{thm:separation} for this setup. Given learner's choice of $T$, pick $m \in \naturals$ large enough (possibly $m \gg 2^{T}$) and define $\Xcal = \{1, \ldots, m\}$. Define $\mu$ to be uniform over $\Xcal$. Next, we define $\Hcal$ for this instance space similar to the one we defined in the proof of Theorem \ref{thm:separation}. In this case, $m<
% \infty$ ensures that $|\Hcal|<\infty$ and $|\Ycal|<\infty$. The proof of Theorem \ref{thm:separation} part (i) remains unchanged. As for part (ii), we need to ensure that when we generate $x_1, \ldots, x_T$ by sampling $x_t \sim \mu$, the instances $x_t$'s will be distinct with a fixed probability $>0$. This probability is $ \geq (1-\frac{T}{m})^T$ and
%  can be made arbitrarily close to $1$ for large enough $T$ and $m =2^T$. Therefore, the proof of Theorem \ref{thm:separation} part (ii) also follows similarly. 

To prove the qualitative separation between PAC and smoothed online learnability in Theorem \ref{thm:separation}, we required $|\Ycal|$ to be unbounded. The following theorem, proved in  Appendix \ref{app:quant_sep}, shows the quantitative dependence of the regret on $|\Ycal|$ when it is bounded. 
\begin{theorem}\label{thm:quant_sep}
    For every $K \in \naturals$, there exists $\Hcal \subseteq \Ycal^{\Xcal}$ with $|\Ycal|= K$ such that $\Hcal$ has a compression scheme of $1$, but $\inf_{\Acal}\, \operatorname{R}^{\mu, \sigma}_{\Acal}(T, \Hcal)  \geq \frac{\log{K}}{24\log{\log{K}}}$ for $\mu=\text{Uniform}(\Xcal)$ and $\sigma=1$. 
\end{theorem}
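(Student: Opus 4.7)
The plan is to finitize the construction from the proof of Theorem \ref{thm:separation}. That class has $|\Ycal|=\infty$ because labels encode an arbitrary continuous tuple $(x_1,\ldots,x_n)$. I would remove this source of infinity by discretizing the first coordinate: partition $\Xcal=[0,1]$ into $M$ equal-length bins $J_1,\ldots,J_M$, and index hypotheses by a permutation $\pi=(j_1,\ldots,j_M)\in S_M$ together with a bitstring $\theta\in\{0,1\}^M$. Set $h^{\theta}_\pi(x)=(\pi,\theta_{\leq t})$ whenever $x\in J_{j_t}$. The distinct labels are pairs $(\pi,\alpha)$ with $\pi\in S_M$ and $\alpha\in\bigcup_{t=1}^M\{0,1\}^t$, so $|\Ycal|=M!\,(2^{M+1}-2)$. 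I would choose $M$ as the largest integer for which this quantity is at most $K$, which by Stirling yields $M=\Theta(\log K/\log\log K)$, padding $\Ycal$ with unused dummy labels to hit exactly $K$.

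The size-$1$ compression scheme adapts part (i) of Theorem \ref{thm:separation} directly: the compressor outputs the single sample whose label carries the longest $\theta$-prefix; the reconstructor reads $\pi$ off that label and extends the observed prefix to an arbitrary string $\theta'\in\{0,1\}^M$, returning $h^{\theta'}_\pi$. Every other sample $(x_i,y_i)$ has bin index $t_i$ no larger than the one used for compression, so $\theta'_{\leq t_i}=\theta_{\leq t_i}$ and the reconstructed hypothesis matches all training labels.

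For the regret lower bound I take $\mu=\text{Uniform}(\Xcal)$ and $\sigma=1$, which forces $\nu_t=\mu$ for all $t$. After $x_1,\ldots,x_T\sim\mu$ are realized, let $k_t$ be the bin of $x_t$ and let $t_1<\cdots<t_u$ be the rounds on which a previously unseen bin first appears. The adversary draws $\theta\sim\text{Uniform}(\{0,1\}^M)$ and picks any $\pi$ with $j_i=k_{t_i}$ for $i\leq u$ (completing arbitrarily), then sets $y_t=h^\theta_\pi(x_t)$. From $y_{t_1}$ the learner reads $\pi$ and hence can compute $s_t:=\pi^{-1}(k_t)$ on every round; from $y_{t_{i-1}}$ the learner also knows $\theta_{\leq i-1}$. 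At each first-occurrence round $t_i$, the correct label hinges on the fresh bit $\theta_i$, which is independent of the learner's history, so the conditional expected loss is at least $1/2$; repeat rounds contribute zero. Summing, the conditional expected regret given $x_{1:T}$ is at least $u/2$, and by the probabilistic method this is also a lower bound on $\sup_{y_{1:T}}$. Integrating over $x_{1:T}$ with the elementary bound $\expect[u]=M(1-(1-1/M)^T)\geq(1-1/e)M$ when $T\geq M$ gives $\inf_\Acal \operatorname{R}^{\mu,\sigma}_\Acal(T,\Hcal)\geq (1-1/e)M/2\geq \log K/(24\log\log K)$ after plugging in the chosen $M$.

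The main obstacle is the constant chase: applying Stirling to $M!(2^{M+1}-2)\leq K$ to conclude $M\geq \log K/(c\log\log K)$ with a small enough $c$, and simultaneously requiring $T\geq M$ to activate the coupon-collector lower bound on $\expect[u]$, so that the product lands at $1/24$. A secondary subtlety worth flagging is that the adversary's permutation $\pi$ is chosen as a function of $x_{1:T}$; this is permitted because the oblivious model of Section \ref{sec:smoothed_model} allows the adversary to pick all labels after observing the whole instance stream.
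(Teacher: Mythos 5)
Your proposal is correct and follows the same blueprint as the paper's proof (finitize the prefix-encoding class of Theorem \ref{thm:separation}, keep the size-$1$ compression scheme, and force the learner to guess a fresh uniform bit on each ``new'' round), but the finitization and the lower-bound accounting differ. The paper shrinks the domain to $\Xcal=\{1,\ldots,m^2\}$ and keeps hypotheses indexed by ordered sequences of $m$ distinct points (retaining the $\star$ label), then draws $x_1,\ldots,x_m\sim\mu$ and conditions on the event that they are distinct, which holds with probability at least $(1-1/m)^m\ge 1/4$, yielding regret $\ge m/8$ with $m\ge\frac{1}{3}\log|\Ycal|/\log\log|\Ycal|$. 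You instead keep $\Xcal=[0,1]$, index hypotheses by permutations of $M$ bins plus a bitstring, and let the adversary fit $\pi$ to the realized first-occurrence order of bins, so the stream is realizable almost surely and the regret is lower bounded by half the expected number of distinct bins, $\mathbb{E}[u]\ge(1-1/e)M$ for $T\ge M$. Your route buys a cleaner compression argument (no $\star$ case) and avoids the distinctness conditioning at the cost of the coupon-collector bookkeeping; the constant chase you flag does close with slack (one gets roughly $\log K/(19\log\log K)$ for large $K$ via $(M+1)!\,2^{M+2}>K$ and $M\le\log_2 K$), matching the paper's level of rigor, which is similarly loose about constants and small $K$. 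Two caveats apply equally to both proofs and are worth stating explicitly: the bound needs $T\ge M$ (the paper needs $m\le T$), and both constructions let the labels depend on the entire realized instance sequence, which is exactly what the third (oblivious) label model of Section \ref{sec:smoothed_model} permits, as you correctly note.
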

Theorem \ref{thm:quant_sep} shows that one can get quantitative separation whenever $K \geq 2^{T\log{T}}$.

\section{A Sufficient Condition for Smoothed Online Classification}\label{sec:sufficiency}

 In this section, we provide a sufficient condition for smoothed online classification. Our main result provides a quantitative upperbound on the expected regret in terms of $C_{\epsilon, \sigma}(\Hcal, \mu)$. 

\begin{theorem} \label{thm:suff} For every $\Hcal \subseteq \mathcal{Y}^{\Xcal}$, $\mu \in \Pi(\Xcal)$ and $\sigma > 0$, we have that 
$$\inf_{\Acal}\,\operatorname{R}^{\mu, \sigma}_{\Acal}(T, \Hcal) \leq 6 \inf_{\epsilon > 0} \Biggl\{ \frac{\epsilon T}{\sigma} + \sqrt{T \log (C_{\epsilon^2, \sigma}(\Hcal, \mu))} \Biggl\}.$$
\end{theorem}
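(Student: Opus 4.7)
The plan is to reduce smoothed online classification to Hedge (multiplicative weights) run on a finite $\epsilon$-cover of $\Hcal$ in the population metric $d_\mu$, using $\sigma$-smoothness to transport the approximation quality from the base measure $\mu$ to every adversarial $\nu_t$. The final bound decomposes into an approximation error $\epsilon T/\sigma$, incurred when replacing each $h \in \Hcal$ by a nearby element of the cover, and a statistical error of order $\sqrt{T \log|\text{cover}|}$ from the Hedge regret against a finite number of experts.

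The key and most delicate step is to show that finiteness of $C_{\epsilon^2, \sigma}(\Hcal, \mu)$ yields a $d_\mu$-cover $\Hcal_\epsilon \subseteq \Hcal$ at scale $\epsilon$ of size at most a constant times $C_{\epsilon^2, \sigma}(\Hcal, \mu)$. Since $\sigma$-smoothness forces $\sigma \leq 1$, the base measure $\mu$ itself lies in $\operatorname{B}(\mu, \sigma)$, so drawing $n$ iid samples from $\mu$ gives $\expect_{x_{1:n} \sim \mu^n}[\Ncal(\epsilon^2, \Hcal, d_{\hat{\mu}_n})] \leq C_{\epsilon^2, \sigma}(\Hcal, \mu)$. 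A Markov inequality extracts a realization $x_{1:n}$ for which the empirical $\epsilon^2$-cover has at most $2\, C_{\epsilon^2, \sigma}(\Hcal, \mu)$ elements, and a symmetrization / Hoeffding argument at sample size $n = \Theta(\epsilon^{-2} \log C_{\epsilon^2, \sigma}(\Hcal, \mu))$ certifies that, with positive probability, every $h \in \Hcal$ lies within $d_\mu$-distance $\epsilon$ of its empirical representative. The gap between the empirical radius $\epsilon^2$ and the population radius $\epsilon$ is exactly the slack needed to absorb the $O(1/\sqrt{n})$ fluctuations of $d_{\hat{\mu}_n}$ around $d_\mu$, which is why the complexity $C$ appears at scale $\epsilon^2$ in the statement.

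Given such a cover $\Hcal_\epsilon$, the remaining steps are standard. Running Hedge over $\Hcal_\epsilon$, treating each $h' \in \Hcal_\epsilon$ as an expert with loss $\indicator\{h'(x_t) \neq y_t\}$ on round $t$, yields
\[\expect \sum_{t=1}^T \indicator\{\Acal(x_t) \neq y_t\} \,-\, \inf_{h' \in \Hcal_\epsilon} \sum_{t=1}^T \indicator\{h'(x_t) \neq y_t\} \,\leq\, \sqrt{(T/2)\log C_{\epsilon^2, \sigma}(\Hcal, \mu)}.\]
For any $h^\star \in \Hcal$ and a cover element $h' \in \Hcal_\epsilon$ with $d_\mu(h^\star, h') \leq \epsilon$, the $\sigma$-smoothness of each $\nu_t$ gives $\prob_{x \sim \nu_t}[h^\star(x) \neq h'(x)] \leq d_\mu(h^\star, h')/\sigma \leq \epsilon/\sigma$, so $\inf_{h' \in \Hcal_\epsilon} \sum_t \indicator\{h'(x_t) \neq y_t\}$ exceeds $\inf_{h^\star \in \Hcal} \sum_t \indicator\{h^\star(x_t) \neq y_t\}$ by at most $\epsilon T/\sigma$ in expectation. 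Summing these two error terms and taking the infimum over $\epsilon$ yields the claimed bound, with the constant $6$ absorbing small numerical factors. The main obstacle is the covering lemma in step one: the empirical cover depends on the random sample, so lifting it to a population cover requires a concentration argument that is uniform over the pairing between each $h \in \Hcal$ and its empirical representative in $\Hcal_n$.
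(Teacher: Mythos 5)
Your overall architecture (extract a $d_\mu$-cover from $C_{\epsilon^2,\sigma}(\Hcal,\mu)$, run exponential weights over the cover, use $\sigma$-smoothness as a change of measure) matches the paper's, but the approximation-error step contains a genuine gap. You bound the excess loss of the cover by arguing that for a comparator $h^\star$ and its cover element $h'$, $\prob_{x\sim\nu_t}[h^\star(x)\neq h'(x)]\leq \epsilon/\sigma$, and conclude that $\inf_{h'\in\Hcal_\epsilon}\sum_t\indicator\{h'(x_t)\neq y_t\}$ exceeds $\inf_{h\in\Hcal}\sum_t\indicator\{h(x_t)\neq y_t\}$ by at most $\epsilon T/\sigma$ \emph{in expectation}. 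That argument treats $h^\star$ as fixed, but in the regret definition the infimum over $\Hcal$ (and the supremum over $y_{1:T}$) sits \emph{inside} the expectation over $x_{1:T}$, so the best-in-hindsight hypothesis is a data-dependent random element of $\Hcal$. What actually needs to be controlled is $\expect_{x_{1:T}\sim\nu_{1:T}}\bigl[\sup_{h\in\Hcal}\inf_{h'\in\Hcal'}\sum_{t=1}^T\indicator\{h'(x_t)\neq h(x_t)\}\bigr]$, and an expectation of a supremum cannot be bounded by the supremum of expectations; the gap is precisely an empirical-process fluctuation. The paper closes it by symmetrization, bounding the Rademacher complexity of the disagreement class $\Gcal\subseteq\Hcal\Delta\Hcal$ via a discretization bound and a lemma relating covers of $\Hcal\Delta\Hcal$ to covers of $\Hcal$, which produces the extra terms $2\epsilon T + 4\sqrt{T\log C_{\epsilon^2/2,\sigma}(\Hcal,\mu)}$ on top of $\epsilon T/\sigma$. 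This is also where the scale $\epsilon^2$ genuinely enters (converting the $L_2$-type metric in the discretization bound to the disagreement metric), not, as you suggest, from slack needed when extracting the population cover.

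A secondary, repairable issue: your extraction of the population cover asserts that with positive probability ``every $h\in\Hcal$ lies within $d_\mu$-distance $\epsilon$ of its empirical representative'' via Hoeffding, but since $\Hcal$ may be infinite and the representative map is sample-dependent, Hoeffding alone does not give this uniformly. The paper's Lemma \ref{lem:ubemeimpliesbme} avoids the problem by covering--packing duality: a hypothetical large $\epsilon$-packing in $d_\mu$ is finite, Hoeffding over its finitely many pairs shows the empirical packing is large with positive probability, contradicting the bound on the expected empirical covering number. You should adopt that route (or otherwise justify uniformity), and then redo the comparator step with a uniform-convergence argument as above.
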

Theorem \ref{thm:suff} shows that as long as $\Hcal$ satisfies the UBEME condition with respect to $\mu$, it is online learnable under a smoothed adversary. As a corollary, we also establish the following sufficient condition in terms of the Graph dimension, a combinatorial dimension characterizing PAC learnability when $|\mathcal{Y}| < \infty$ (see Appendix \ref{app:dim} for a complete definition) \citep{Natarajan1989}.

\begin{corollary} \label{cor:graph} For every $\Hcal \subseteq \mathcal{Y}^{\Xcal}$, $\mu \in \Pi(\Xcal)$ and $\sigma > 0$, we have that 
$$\inf_{\Acal}\,\operatorname{R}^{\mu, \sigma}_{\Acal}(T, \Hcal) \leq 6 \inf_{\epsilon > 0} \Biggl\{ \frac{\epsilon T}{\sigma} + \sqrt{T \, \operatorname{G}(\Hcal) \, \log \Bigl(\frac{41 |\Ycal|}{\epsilon^2}\Bigl)} \Biggl\} \leq 12 \sqrt{T \, \operatorname{G}(\Hcal) \, \log \Bigl(\frac{41 \, T \, |\Ycal|}{\sigma^2}\Bigl)},$$
\noindent where $\operatorname{G}(\Hcal)$ denotes the Graph dimension of  $\Hcal$.
\end{corollary}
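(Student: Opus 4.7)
The plan is to obtain the corollary as a direct consequence of Theorem \ref{thm:suff} by bounding the complexity measure $C_{\epsilon^2,\sigma}(\Hcal,\mu)$ purely in terms of $\operatorname{G}(\Hcal)$ and $|\Ycal|$, and then optimizing over $\epsilon>0$. So the proof splits into two tasks: first, a sample-independent bound on empirical covering numbers; second, a calculus exercise to derive the clean $\sqrt{T\,\operatorname{G}(\Hcal)\,\log(\cdot)}$ form.

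For the first task, I would invoke the multiclass analogue of Haussler's packing lemma stated in the introduction, namely $\mathcal{N}(\epsilon,\Hcal,d_n)\le (22|\Ycal|/\epsilon)^{\operatorname{G}(\Hcal)}$ for any $x_1,\ldots,x_n$. Critically, this bound is deterministic in the samples and uniform over $n$ and the distributions generating them, so taking a supremum over $n$ and $\nu_{1:n}\in\operatorname{B}(\mu,\sigma)$ and then an expectation over $x_{1:n}$ changes nothing. Replacing $\epsilon$ by $\epsilon^2$ (to match the argument appearing inside the logarithm in Theorem \ref{thm:suff}) and using the slightly loosened constant $41$ in place of $22$, I obtain
\[
C_{\epsilon^2,\sigma}(\Hcal,\mu)\;\le\;\Bigl(\tfrac{41|\Ycal|}{\epsilon^2}\Bigr)^{\operatorname{G}(\Hcal)}, \qquad \text{so}\qquad \log C_{\epsilon^2,\sigma}(\Hcal,\mu)\;\le\;\operatorname{G}(\Hcal)\log\!\Bigl(\tfrac{41|\Ycal|}{\epsilon^2}\Bigr).
\]
Plugging this into Theorem \ref{thm:suff} immediately yields the first inequality in the corollary.

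For the second task, I would take the concrete choice $\epsilon=\sigma/\sqrt{T}$, which balances the two terms inside the infimum. With this choice, $\epsilon T/\sigma=\sqrt{T}$ and $41|\Ycal|/\epsilon^2=41|\Ycal|T/\sigma^2$, so the bound becomes
\[
6\sqrt{T}\;+\;6\sqrt{T\,\operatorname{G}(\Hcal)\,\log(41|\Ycal|T/\sigma^2)}\;\le\;12\sqrt{T\,\operatorname{G}(\Hcal)\,\log(41|\Ycal|T/\sigma^2)},
\]
where the last step uses $\operatorname{G}(\Hcal)\ge 1$ and that the logarithm is at least $1$ (otherwise $|\Ycal|=1$ and the problem is trivial). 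This delivers the second inequality.

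The only non-routine ingredient is the multiclass Haussler inequality $\mathcal{N}(\epsilon,\Hcal,d_n)\le (22|\Ycal|/\epsilon)^{\operatorname{G}(\Hcal)}$; once that lemma is in hand, the corollary is purely bookkeeping. This generalization itself is the main technical obstacle, but it is asserted (and presumably proved) in the discussion preceding the corollary, so I would simply cite it and proceed with the two steps above.
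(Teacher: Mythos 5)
Your outline follows the same route as the paper --- bound $C_{\epsilon^2,\sigma}(\Hcal,\mu)$ by a sample-independent covering bound in terms of $\operatorname{G}(\Hcal)$ and $|\Ycal|$, plug into Theorem \ref{thm:suff}, then set $\epsilon=\sigma/\sqrt{T}$ --- and the bookkeeping parts are correct: a deterministic bound on $\Ncal(\epsilon,\Hcal,d_{\hat{\mu}_n})$ holding uniformly over $n$ and $x_{1:n}$ does pass through the suprema and the expectation defining $C_{\epsilon,\sigma}(\Hcal,\mu)$, and since $\sigma$-smoothness forces $\sigma\le 1$, the logarithm in the final bound exceeds $1$, so the last numeric step is sound whenever $\operatorname{G}(\Hcal)\ge 1$ (the case $\operatorname{G}(\Hcal)=0$ is trivial, though it is not only the case $|\Ycal|=1$: it means all hypotheses incur identical loss on every example, so any fixed $h\in\Hcal$ has zero regret).

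The gap is that the one ``non-routine ingredient'' you propose to cite, $\Ncal(\epsilon,\Hcal,d_{\hat{\mu}_n})\le(22|\Ycal|/\epsilon)^{\operatorname{G}(\Hcal)}$, is not an external result you can lean on: it is a claim of this paper, merely advertised in the introduction, and the appendix proof of this very corollary is where it gets established. Citing it therefore leaves the corollary's main technical content unproved. The missing argument is a reduction to the binary loss class: given $x_{1:n}$ with empirical measure $\hat{\mu}_n$, define $\tilde{\mu}_n\in\Pi(\Xcal\times\Ycal)$ by drawing $x\sim\hat{\mu}_n$ and then $y\sim\operatorname{Uniform}(\Ycal)$; a direct computation gives $d_{\tilde{\mu}_n}(\ell\circ h,\ell\circ h')=\frac{2}{|\Ycal|}\,d_{\hat{\mu}_n}(h,h')$, so any $\frac{2\epsilon}{|\Ycal|}$-cover of $\ell\circ\Hcal$ under $d_{\tilde{\mu}_n}$ induces an $\epsilon$-cover of $\Hcal$ under $d_{\hat{\mu}_n}$. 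Since $\ell\circ\Hcal\subseteq\{0,1\}^{\Xcal\times\Ycal}$ satisfies $\operatorname{VC}(\ell\circ\Hcal)=\operatorname{G}(\Hcal)$ by definition, Haussler's packing lemma (Lemma \ref{lem:hausspack}, valid for any measure, in particular $\tilde{\mu}_n$) yields $\Ncal(\epsilon,\Hcal,d_{\hat{\mu}_n})\le\bigl(\tfrac{41|\Ycal|}{2\epsilon}\bigr)^{\operatorname{G}(\Hcal)}$, which is exactly the bound you assumed. With this inserted, the remainder of your argument goes through as written.
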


Corollary \ref{cor:graph}, whose proof is in Appendix \ref{app:graph}, shows that PAC learnability of $\Hcal$ is sufficient for smoothed online classification whenever $|\Ycal| < \infty$. When $|\Ycal| = 2$, this bounds, up to a constant factor, recovers that from \cite{haghtalab2018foundation}.  We now proceed with the proof of Theorem \ref{thm:suff}.
\begin{proof}(of Theorem \ref{thm:suff}) Let $\nu_1, \ldots, \nu_T \in \, \operatorname{B}(\mu, \sigma)$ denote the sequence of $\sigma$-smooth distributions picked by the adversary. Fix a $\epsilon > 0$. Then, by Lemma \ref{lem:ubemeimpliesbme}, we have that $\mathcal{N}(\epsilon, \Hcal, d_{\mu}) \leq C_{\frac{\epsilon}{2}, \sigma}(\Hcal, \mu)$. Let $\Hcal^{\prime} \subset \Hcal$ denote an $\epsilon$-cover with respect to $d_{\mu}$ of size at most $C_{\frac{\epsilon}{2}, \sigma}(\Hcal, \mu)$. Let $\Acal$ denote the online learner that runs the Randomized Exponential Weights Algorithm (REWA) on the the data stream $(x_1, y_1), ..., (x_T, y_T)$ using $\mathcal{H}^{\prime}$ as its set of experts. By the guarantees of REWA, 
\begin{align*}\expect_{\Acal}\left[\sum_{t=1}^T \mathbbm{1}\{\mathcal{A}(x_t) \neq y_t\} \right] &\leq \inf_{h^{\prime} \in \Hcal^{\prime}}\sum_{t=1}^T \mathbbm{1}\{h^{\prime}(x_t) \neq y_t\} + \sqrt{2 T \, \log(|\Hcal^{\prime}|)}\\
&\leq \inf_{h^{\prime} \in \Hcal^{\prime}}\sum_{t=1}^T \mathbbm{1}\{h^{\prime}(x_t) \neq y_t\} + \sqrt{2 T \, \log(C_{\frac{\epsilon}{2}, \sigma}(\Hcal, \mu))}\\
% &= \inf_{h \in \Hcal}\sum_{t=1}^T \mathbbm{1}\{h(x_t) \neq y_t\} + \sup_{h \in \Hcal}\inf_{h^{\prime} \in \Hcal^{\prime}}\sum_{t=1}^T \mathbbm{1}\{h^{\prime}(x_t) \neq y_t\} - \mathbbm{1}\{h(x_t) \neq y_t\} + \sqrt{2 T \, \log(C_{\frac{\epsilon}{2}, \sigma}(\Hcal, \mu))}\\
&\leq \inf_{h \in \Hcal}\sum_{t=1}^T \mathbbm{1}\{h(x_t) \neq y_t\} + \sup_{h \in \Hcal}\inf_{h^{\prime} \in \Hcal^{\prime}}\sum_{t=1}^T \mathbbm{1}\{h^{\prime}(x_t) \neq h(x_t)\} + \sqrt{2 T \, \log(C_{\frac{\epsilon}{2}, \sigma}(\Hcal, \mu))}
\end{align*}
where the expectation is only taken with respect to the randomness of the REWA and the last inequality follows by the triangle inequality. Taking an outer expectation with respect to the process $x_{1:T} \sim \nu_{1:T},$ we have $   \mathbb{E}\Big[\sum_{t=1}^T \mathbbm{1}\{\mathcal{A}(x_t) \neq y_t\} \Big]$ is at most
\begin{equation*}
    \begin{split}
    \expect_{x_{1:T} \sim \nu_{1:T}}\left[ \inf_{h \in \Hcal}\sum_{t=1}^T \mathbbm{1}\{h(x_t) \neq y_t\}\right] + \expect_{x_{1:T} \sim \nu_{1:T}}\left[ \sup_{h \in \Hcal}\inf_{h^{\prime} \in \Hcal^{\prime}}\sum_{t=1}^T \mathbbm{1}\{h^{\prime}(x_t) \neq h(x_t)\}\right] + \sqrt{2 T \, \log(C_{\frac{\epsilon}{2}, \sigma}(\Hcal, \mu))}.
    \end{split}
\end{equation*}
$$$$
It remains to bound $\mathbb{E}\left[ \sup_{h \in \Hcal}\inf_{h^{\prime} \in \Hcal^{\prime}}\sum_{t=1}^T \mathbbm{1}\{h^{\prime}(x_t) \neq h(x_t)\}\right]$. We provide a sketch of the proof here and defer the full details to Appendix \ref{app:suff}. Consider the class $\Gcal = \{x \mapsto \mathbbm{1}\{h^{\prime}(x) \neq h(x)\}: h \in \Hcal\}$, where $h^{\prime} \in \Hcal^{\prime}$ denotes the $\epsilon$-cover of $h$ with respect to $d_{\mu}$, and note that 
$$\expect_{x_{1:T} \sim \nu_{1:T}}\left[ \sup_{h \in \Hcal}\inf_{h^{\prime} \in \Hcal^{\prime}}\sum_{t=1}^T \mathbbm{1}\{h^{\prime}(x_t) \neq h(x_t)\}\right] \leq \expect_{x_{1:T} \sim \nu_{1:T}}\left[ \sup_{g \in \Gcal}\sum_{t=1}^T g(x_t)\right].$$

By standard symmetrization arguments, we get  
\begin{align*}
\expect_{x_{1:T} \sim \nu_{1:T}}\left[ \sup_{g \in \Gcal}\sum_{t=1}^T g(x_t)\right] 
&\leq \sup_{g \in \Gcal} \expect_{x_{1:T} \sim \nu_{1:T}}\left[\sum_{t=1}^T g(x^{\prime}_t) \right] + 2T \expect_{x_{1:T} \sim \nu_{1:T}}\left[\hat{\mathfrak{R}}(\Gcal, x_{1:T})\right]
\end{align*}
\noindent where $\hat{\mathfrak{R}}(\Gcal, x_{1:T})$ is the Rademacher complexity of $\mathcal{G}$ (see Appendix \ref{app:dim}). Note that $\Gcal \subseteq \Hcal \Delta \Hcal$ where $\Hcal \Delta \Hcal := \{x \mapsto \mathbbm{1}\{h_1(x) \neq h_2(x)\} : h_1, h_2 \in \Hcal\}$, and thus $\hat{\mathfrak{R}}(\Gcal, x_{1:T}) \leq \hat{\mathfrak{R}}(\Hcal \Delta \Hcal, x_{1:T})$. Using the discretization-based upperbound (Lemma \ref{lem:discbound}) on the empirical Rademacher complexity and a relation between the covering numbers of $\Hcal$ and $\Hcal \Delta \Hcal$ (Lemma \ref{lem:covnumdis}), we have 
$$\hat{\mathfrak{R}}(\Hcal \Delta \Hcal, x_{1:T})\leq \epsilon + \sqrt{\frac{2 \log \Ncal(\epsilon, \Hcal \Delta \Hcal, \rho_{\mu_T})}{T}} \leq \epsilon + \sqrt{\frac{2 \log \Ncal(\epsilon^2, \Hcal \Delta \Hcal, d_{\mu_T})}{T}} \leq \epsilon + 2\sqrt{\frac{\log \Ncal(\frac{\epsilon^2}{2}, \Hcal, d_{\mu_T})}{T}}.$$
\noindent  Plugging in the upperbound on the Rademacher complexity and using the change of measure, $\sigma$-smoothness, and the definition of $\Hcal^{\prime}$ on the first term gives
\begin{align*}
\expect_{x_{1:T} \sim \nu_{1:T}}\left[ \sup_{g \in \Gcal}\sum_{t=1}^T g(x_t) \right] \leq \frac{\epsilon T}{\sigma} + 2 \epsilon \, T + 4 \sqrt{T \, \log \left( \expect_{x_{1:T} \sim \nu_{1:T}}\left[\Ncal \left(\frac{\epsilon^2}{2}, \Hcal, d_{\mu_T} \right)\right] \right)}.
\end{align*}
Using the definition of $C_{\epsilon, \sigma}(\Hcal, \mu)$ to get
$$\expect_{x_{1:T} \sim \nu_{1:T}}\left[ \sup_{h \in \Hcal}\inf_{h \in \Hcal^{\prime}}\sum_{t=1}^T \mathbbm{1}\{h^{\prime}(x_t) \neq h(x_t)\}\right] \leq \frac{\epsilon T}{\sigma} + 2 \epsilon \, T + 4 \sqrt{T \, \log C_{\frac{\epsilon^2}{2}, \sigma}(\Hcal, \mu)},$$
substituting into the regret bound for $\Acal$, and doing some algebra completes the proof sketch. 
\end{proof}

Our upperbounds in terms of expected empirical covering numbers can be meaningful even when VC and Graph dimension of $\Hcal$ is infinity. As a simple example, let $\Xcal = [0, 1]$, $\mu = \text{Uniform}(\Xcal)$, and consider the binary hypothesis class $\Hcal = \{x \mapsto \mathbbm{1}\{x \in A\}: A \subset \mathbb{Q}, |A| < \infty\}$. It's not too hard to see that $\operatorname{VC}(\Hcal) = \infty$. On the other hand, $C_{\epsilon, \sigma}(\Hcal, \mu) = 1$ since for every $n \in \mathbb{N}$, the sample $x_{1:n} \sim \mu$ does not lie in $\mathbb{Q}$ almost surely, and when $x_{1:n} \notin \mathbb{Q}$, $d_{\hat{\mu}_n}(h_1, h_2) = 0$ for all $h_1, h_2 \in \Hcal.$  

To prove Theorem \ref{thm:suff}, we show that the UBEME implies a bound on the metric entropy $ \Ncal(\epsilon, \Hcal, d_{\mu}).$ It is natural ask whether the finiteness of $ \Ncal(\epsilon, \Hcal, d_{\mu})$ for every $\epsilon > 0$ alone is necessary and sufficient for smoothed online learnability. Unfortunately, it is neither sufficient nor necessary. 

\begin{theorem}\label{thm:suffnornecc} Let $\mathcal{X} = [0, 1]$ and $\mu = \operatorname{Uniform}(\Xcal)$. Then, 
\begin{itemize}
\item[(i)] There exists a $\mathcal{H} \subseteq \{0, 1\}^{\Xcal}$ such that $\Ncal(\epsilon, \Hcal, d_{\mu}) = 1$ for every $\epsilon > 0$ but $\operatorname{R}_{\Acal}^{\mu, \sigma}(T, \Hcal) \geq \frac{T}{2}$ for every $\sigma > 0.$
\item[(ii)] There exists $\mathcal{H} \subseteq \mathbb{N}^{\Xcal}$ such that $\Ncal(\epsilon, \Hcal, d_{\mu}) = \infty$ for every $\epsilon > 0$ but $\inf_{\Acal}\operatorname{R}_{\Acal}^{\mu, \sigma}(T, \Hcal) = O(\sqrt{T \log(T)})$ for every $\sigma > 0.$
\end{itemize}
\end{theorem}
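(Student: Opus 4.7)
\textbf{Proof proposal for Theorem \ref{thm:suffnornecc}.}

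The plan is to construct a concrete hypothesis class for each part of the theorem.

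\textbf{Part (i).} I would take $\Hcal = \{\mathbbm{1}_A : A \subset [0,1],\, |A| < \infty\} \subseteq \{0,1\}^{[0,1]}$, i.e.\ indicators of finite sets. Every such $A$ satisfies $\mu(A)=0$, so every $h \in \Hcal$ agrees with the zero function $\mu$-almost everywhere. Hence $\{\mathbbm{1}_\emptyset\}$ is a $0$-cover of $\Hcal$ in $d_\mu$, giving $\Ncal(\epsilon,\Hcal,d_\mu)=1$ for all $\epsilon>0$. For the regret lower bound, I let the adversary choose $\nu_t = \mu$ for every $t$ (which is $\sigma$-smooth for any $\sigma \in (0,1]$), so $x_1,\ldots,x_T$ are i.i.d.\ uniform and distinct with probability one. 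Following the probabilistic-method strategy used in the proof of Theorem \ref{thm:separation}, I draw $\theta \sim \text{Unif}(\{0,1\}^T)$ and set $y_t=\theta_t$. Since $\theta_t$ is independent of $\Acal(x_t)$, the conditional expectation $\expect_{\theta_t}[\indicator\{\Acal(x_t)\neq \theta_t\}]=\tfrac{1}{2}$ round by round, while the hypothesis $h^\star = \mathbbm{1}_{\{x_t : \theta_t=1\}}$ is a finite-set indicator in $\Hcal$ achieving zero cumulative loss. Averaging over $\theta$ gives expected regret $\geq T/2$, and the probabilistic method upgrades this to the existence of a deterministic $y_{1:T}$ (as a function of $x_{1:T}$) with the same bound; taking expectation over $x_{1:T}$ then yields $\operatorname{R}^{\mu,\sigma}_\Acal(T,\Hcal)\geq T/2$.

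\textbf{Part (ii).} I would take $\Hcal = \{h_k : k \in \naturals\}$ with $h_k(x)\equiv k$. Distinct constants have $d_\mu(h_k,h_j)=\indicator\{k\neq j\}=1$, so no finite set covers $\Hcal$ at any scale $\epsilon<1$, hence $\Ncal(\epsilon,\Hcal,d_\mu)=\infty$ for every such $\epsilon$. For the upper bound, note constants ignore $x_t$, so this reduces to online prediction of $y_t \in \naturals$ against countably many experts, independent of the $\sigma$-smoothness. The key structural observation is that over $T$ rounds at most $T$ distinct labels appear, and the infimum $\inf_k \sum_t \indicator\{k\neq y_t\}$ is attained at some $k^\star \in K_T := \{y_1,\ldots,y_T\}$, since any $k \notin K_T$ incurs loss $T$. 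I would run exponential weights with the proper prior $\pi(k)=6/(\pi^2 k^2)$ on $\naturals$, but applied in a \emph{rank space}: the learner maintains a partial bijection $\phi_t$ assigning each freshly observed label the next positive integer, updates cumulative losses for each rank (an unseen rank incurs loss $1$ each round), samples a rank $\hat r_t$ from the EWA distribution, and plays $\phi_{t-1}^{-1}(\hat r_t)$ when that rank has been assigned and an arbitrary unused label otherwise. Because $\Hcal$ is invariant under permutations of $\naturals$, regret in the rank space equals regret in the original problem; and in the rank space the optimal rank is at most $T$, so the standard EWA bound $O(\sqrt{T\log(1/\pi(r^\star))})$ with $r^\star \leq T$ gives $O(\sqrt{T\log T})$ regret, uniformly in $\nu_{1:T}$ and $y_{1:T}$, for every $\sigma>0$.

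\textbf{Main obstacle.} Part (i) is straightforward once the class is chosen. The delicate step is the upper bound in part (ii): the learner must compete with an expert class of unbounded index without a priori knowing which label will be optimal, and a naive EWA with the fixed prior $\pi(k)\propto 1/k^2$ would yield a regret scaling like $\sqrt{T\log k^\star}$ which is uncontrolled if the adversary uses a huge label. The relabeling-by-first-appearance trick, which exploits the permutation-invariance of the constant class, is the essential device that replaces $\log k^\star$ by $\log T$; making this substitution rigorous (in particular, handling the case where the EWA samples a rank that has not yet been assigned, and verifying that the induced learner is a well-defined function of the history) is the place where care is required. The remainder is a standard EWA regret calculation.
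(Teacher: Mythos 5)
Your proof is correct, and part (i) matches the paper essentially verbatim: same class of finite-set indicators, same $d_\mu$-degeneracy giving a singleton cover, same iid-uniform adversary with uniformly random labels, and the same probabilistic-method extraction of a hard label sequence. Part (ii) uses the same class of constant functions and the same argument that $\Ncal(\epsilon,\Hcal,d_\mu)=\infty$ for $\epsilon<1$, but your regret upper bound takes a genuinely different route. The paper simply observes that the constant class has Littlestone dimension $1$ and invokes Theorem 4 of \cite{hanneke2023multiclass} to get $O(\sqrt{T\log T})$ expected regret against arbitrary (hence smoothed) adversaries; this is shorter and immediately generalizes to any class of finite Littlestone dimension. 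Your rank-space exponential-weights construction is self-contained and makes the $\log T$ factor transparent, and it is sound modulo two small points you partially flag: first, the EWA learning rate cannot be tuned to the unknown $\pi(r^\star)$, but since the best constant lies in $\{y_1,\ldots,y_T\}$ you know a priori that $r^\star\leq T$, so you can either tune $\eta\propto\sqrt{\log T/T}$ or, more simply, run EWA over the $T$ ranks with a uniform prior and get $\sqrt{2T\log T}$ directly, which makes the nonuniform prior unnecessary; second, the rank-$r^\star$ expert's conventional loss (counting loss $1$ on rounds before its label first appears) exceeds the true best constant's loss by at most an additive $1$, incurred at the first-appearance round where the rank is assigned only after $y_t$ is revealed, and this is absorbed into the $O(\sqrt{T\log T})$ bound. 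With those two details made explicit, your argument is a valid, more elementary substitute for the paper's citation.
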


\begin{proof}

We first prove (i). Consider the hypothesis class $\Hcal = \{x \mapsto \mathbbm{1}\{x \in S\}: S \subset \Xcal, |S| < \infty\}$. Note that for every $h_1, h_2 \in \Hcal$, we have that $d_{\mu}(h_1, h_2) = \mathbb{P}_{x \sim \mu}\left[h_1(x) \neq h_2(x)\right] = 0$ since $h_1$ and $h_2$ disagree on at most a finite number of points in $\Xcal$. Thus, for every $\epsilon > 0$, $\Hcal$ is trivially coverable using exactly one hypothesis in $\Hcal$. To show that $\operatorname{R}_{\Acal}^{\mu, \sigma}(T, \Hcal) \geq \frac{T}{2}$ for every $\sigma > 0$, consider the adversary that picks $\nu_t = \mu$ for all $t \in [T]$. The process $x_{1:T} \sim \nu_{1:T}$ is then an iid draw from $\mu$ of length $T$. Consider the data stream $(x_1, y_1), ..., (x_T, y_T)$ where $x_{1:T} \sim \mu^T$ and $y_t \sim \text{Unif}(\{0, 1\})$ for every $t \in [T]$. Such a stream is realizable by $\Hcal$ almost surely since the the sequence of instances $x_{1:T}$ are all distinct with probability $1$ and there can be at most a finite number of timepoints where $y_t = 1$. On the other hand, any learning algorithm $\mathcal{A}$ must make at least $T/2$ mistakes in expectation (with respect to all sources of randomness), since the labels $y_t \sim \text{Unif}(\{0, 1\})$. Thus, by the probabilistic method, for every learning algorithm $\Acal$, there must exist a sequence of labels $y_{1:T}$, such that $\Acal$'s expected regret is $T/2$.

We now prove (ii). Let $\Hcal = \{x \mapsto a: a \in \mathbb{N}\}$ be the class of constant functions. Note that for every $h_1, h_2 \in \Hcal$, we have that $d_{\mu}(h_1, h_2) = 1$ since $h_1$ and $h_2$ disagree everywhere on $\Xcal$. Thus, for every $\epsilon < 1$, we have that $\Ncal(\epsilon, \Hcal, d_{\mu}) = \infty$ since $|\Hcal| = \infty.$ On the other hand,  the Littlestone dimension of $\Hcal$ is $1$. Thus, by Theorem 4 from \cite{hanneke2023multiclass}, we get that $\inf_{\Acal} \operatorname{R}_{\Acal}^{\mu, \sigma}(T, \Hcal) = O(\sqrt{T \log T})$ for every $\sigma > 0.$
\end{proof}

\section{Discussion}
In this work, we show a separation between the learnability of $\Hcal \subseteq \Ycal^{\Xcal}$ in the PAC setting and the smoothed online setting when $|\Ycal|$ is unbounded. We also provide a sufficient condition for smoothed online learnability under any base measure $\mu$. 
%our sufficiency is rather restrictive in two aspects. First, the uniformly bounded projection size rules out even simple classes such as $\Hcal = \{x \mapsto a: a \in \mathbbm{N}\}$, constant functions over natural numbers. Thus, a natural extension of our work would be to find a sufficiency condition that works for every $ \Hcal \subseteq \Ycal^{\Xcal}$. Of course, the finiteness of the Littlestone dimension of $\Hcal$ is sufficient for smoothed online learnability \citep{hanneke2023multiclass}. However, the Littlestone dimension is infinite even for natural classes such as linear classifiers, which is learnable in the smoothed setting when $|\Ycal|< \infty$. Therefore, the goal is to find a generic sufficiency condition weaker than the finiteness of the Littlestone dimension of $\Hcal$. 
However, as noted in Section \ref{sec:sufficiency}, our sufficient condition is not necessary for the smoothed learnability of $\Hcal$. Thus, an important open question is to find a condition that is both necessary and sufficient for smoothed online learnability. Traditionally, in learning theory, learnability is characterized in terms of a combinatorial property of just the hypothesis class $\Hcal \subseteq \Ycal^{\Xcal}$. However,  the property of $\Hcal$ alone cannot provide a characterization of learnability in the smoothed online setting. Choosing $\mu$ to be a Dirac distribution will make any $\Hcal$, even the set of all measurable functions from $\Xcal$ to $\Ycal$, trivially learnable. Thus, the characterization of learnability must necessarily be a property of the tuple $(\Hcal, \mu)$.
%A natural complexity measure that is a joint property of $\Hcal$ and $\mu$ is the metric entropy of $\Hcal$ under metric $d_{\mu}$. Nevertheless, as we show in Section \ref{sec:sufficiency}, bounded metric entropy of $\Hcal$ under $d_{\mu}$ is neither necessary nor sufficient for the learnability of $\Hcal$.
To that end, we pose the following question. 
\begin{center}
  Given $(\Hcal, \mu)$, is there a complexity measure that characterizes the smoothed online learnability of $\Hcal$ with base measure $\mu$?
\end{center}
% Formally, a combinatorial dimension is a function $\operatorname{D}$ that maps $(\Hcal, \mu)$ and a scale $\gamma>0$ to $\naturals \cup \{0, \infty\}$
% with the following two properties: (1) $\Hcal$ is smoothed online learnable with base measure $\mu$  if and only if
% $ \operatorname{D}(\Hcal, \mu, \gamma)< \infty$ for every $\gamma>0$ and (2) the minimax expected regret $\inf_{\Acal}\, \operatorname{R}^{\mu, \sigma}_{\Acal}(T, \Hcal)$ can
% be lower- and upper-bounded in terms of $T$, $\operatorname{D}(\Hcal, \mu, \cdot)$, and $\sigma>0$. In particular, the minimax expected regret $\inf_{\Acal}\, \operatorname{R}^{\mu, \sigma}_{\Acal}(T, \Hcal)$ should not depend on other properties of the learning problem such as $|\Ycal|$.

% In Section \ref{sec:smoothed_model}, we discussed two types of smoothed oblivious adversaries: (I) $y_t$ can depend on the entire realization $(x_1, \ldots, x_T) \sim (\nu_1, \ldots, \nu_T)$ of instances and (II) $y_t$ can only depend on $(x_1, \ldots, x_{t}) \sim (\nu_1, \ldots, \nu_t)$. The former is a more difficult model from the learner's point of view. Thus, our sufficiency result in Section \ref{sec:sufficiency} also holds for adversary (II). However, our proof of the separation between the PAC learnability and the smoothed online learnability only holds for adversary (I). It is unknown whether such separation holds for adversary (II). Thus, we ask:
% \begin{center}
%     Does there exist a $(\Hcal, \mu)$ such that $\Hcal$ is PAC learnable but not learnable in the smoothed setting under adversary (II)?
% \end{center}

\newpage

%\vinod{add open question seperation for other definitions oblivious}
\bibliographystyle{plainnat}
\bibliography{references}

%%%%%%%%%%%%%%%%%%%%%%%%%%%%%%%%%%%%%%%%%%%%%%%%%%%%%%%%%%%%

\newpage 

\appendix

\section{PAC Learnability, Combinatorial Dimensions,  Complexity Measures} \label{app:dim}
We begin by defining the agnostic PAC framework, a canonical learning model in the batch setting.

\begin{definition}[Agnostic PAC Learnability]\label{def:agn_PAC}
A hypothesis class $\Hcal \subseteq \Ycal^{\Xcal}$ is agnostic PAC learnable if there exists a function $m:(0,1)^2 \to \naturals$ and a learning algorithm $\mathcal{A}: \cup_{i \geq 1} (\Xcal \times \Ycal)^i \to \Ycal^{\Xcal}$ with the following property:  for every $\epsilon, \delta \in (0, 1)$ and for every distribution $\Dcal $ on $\Xcal \times \Ycal$, $\mathcal{A}$ running on $n \geq m(\epsilon, \delta)$ i.i.d. samples from $\Dcal$ outputs a predictor $\mathcal{A}_S$ such that with probability at least $ 1-\delta$ over $S \sim\Dcal^{n}$,
\[\expect_{(x,y) \sim \Dcal}\, \left[\indicator\{\Acal_S(x) \neq y\} \right] \leq \inf_{h \in \Hcal} \expect_{(x,y) \sim \Dcal}\, [\indicator\{h(x) \neq y\}] + \varepsilon.\]
\end{definition}
\noindent 

 The VC and Graph dimension characterize PAC learnability when $|\Ycal| = 2$ and $|\Ycal| < \infty$ respectively. 

\begin{definition}[Vapnik-Chervonenkis (VC) Dimension]\label{def: vc}
A set $S = \{x_1, \ldots, x_d\} \subset \Xcal$ is shattered by a binary function class $\Hcal \subseteq \{0, 1\}^\Xcal$ if for every $\tau \in \{0, 1\}^d$, there exists a hypothesis $h_{\tau} \in \Hcal$ such that for all $i \in [d]$, we have $h_{\tau}(x_i) = \tau_i$. The VC dimension of $\Hcal$, denoted $\operatorname{VC}(\Hcal)$, is the size of the largest shattered set $S \subseteq \Xcal$. If $\Hcal$ can shatter arbitrarily large sets, we say that $\operatorname{VC}(\Hcal) = \infty$. 
\end{definition}

\begin{definition}[Graph Dimension]\label{def: graph}
For any multiclass hypothesis class $\Hcal \subseteq \Ycal^{\Xcal}$, let $\ell \circ \Hcal := \{(x, y) \mapsto \mathbbm{1}\{h(x) \neq y\}: h \in \Hcal\} \subseteq \{0, 1\}^{(\Xcal \times \Ycal)}$ denotes its loss class. The Graph dimension of $\Hcal$ is defined as $\operatorname{G}(\Hcal) := \operatorname{VC}(\ell \circ \Hcal).$
\end{definition}

Our proof of this result relies on bounding the Rademacher complexity, a canonical complexity measure used to establish generalization bounds in the batch setting \citep{bartlett2002rademacher}. 

\begin{definition} [Empirical Rademacher Complexity] \label{def:rad} Let $\Fcal \subseteq \mathbb{R}^{\Xcal}$ and $x_1, ..., x_n \in \Xcal^n$. The empirical Rademacher complexity of $\mathcal{F}$ with respect to $x_{1:n}$ is defined as 
$$\hat{\mathfrak{R}}(\mathcal{\mathcal{F}}, x_{1:n}) = \frac{1}{n}\expect_{\tau}\left[\sup_{f \in \mathcal{F}} \left(\sum_{i=1}^n \tau_i f(x_i) \right)\right]$$
where $\tau_1, ..., \tau_n$ are independent \emph{Rademacher} random variables. 
\end{definition}

The following upperbound on the empirical Rademacher complexity follows by a simple application of Massart's lemma \citep[Lemma 28.6]{ShwartzDavid}. 

\begin{lemma}[Discretization Bound] \label{lem:discbound}
For every $\Fcal \subseteq \mathbbm{R}^{\Xcal}$ and $x_1, ..., x_n \in \mathcal{X}^n$, we have that 

$$\hat{\mathfrak{R}}(\mathcal{\mathcal{F}}, x_{1:n}) \leq \inf_{\epsilon > 0} \Bigg\{\epsilon + \Biggl(\sup_{f \in \Fcal}\sqrt{\expect_{\hat{\mu}_n}\left[f^2 \right]}\Biggl)\sqrt{\frac{2 \log \Ncal(\epsilon, \Fcal, \rho_{\hat{\mu}_n})}{n}}\Bigg\}$$

\noindent where $\hat{\mu}_n$ denotes the empirical measure on the sample $x_{1:n}$ and  $\rho_{\mu}$ is the distance metric defined as 

$$\rho_{\hat{\mu}_n}(f_1, f_2) := \sqrt{\expect_{x \sim \hat{\mu}_n}\left[(f_1(x) - f_2(x))^2\right]}.$$
\end{lemma}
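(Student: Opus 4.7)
The plan is to prove this by the standard one-step chaining argument: replace each $f$ by its nearest neighbor in a finite $\rho_{\hat{\mu}_n}$-cover, bound the supremum over the cover by Massart's lemma, and bound the residual by Cauchy--Schwarz.

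First I would fix $\epsilon > 0$ and let $\mathcal{C} \subseteq \mathcal{F}$ be a (proper) $\epsilon$-cover of $\mathcal{F}$ under the pseudometric $\rho_{\hat{\mu}_n}$, of cardinality $N := \mathcal{N}(\epsilon, \mathcal{F}, \rho_{\hat{\mu}_n})$. For each $f \in \mathcal{F}$, pick $\pi(f) \in \mathcal{C}$ with $\rho_{\hat{\mu}_n}(f, \pi(f)) \le \epsilon$. Writing $f(x_i) = \pi(f)(x_i) + (f(x_i) - \pi(f)(x_i))$ and pushing the supremum inside gives
\[
\hat{\mathfrak{R}}(\mathcal{F}, x_{1:n}) \;\le\; \frac{1}{n}\,\expect_{\tau}\!\left[\sup_{g \in \mathcal{C}} \sum_{i=1}^n \tau_i\, g(x_i)\right] \;+\; \frac{1}{n}\,\expect_{\tau}\!\left[\sup_{f \in \mathcal{F}} \sum_{i=1}^n \tau_i\, (f(x_i) - \pi(f)(x_i))\right].
\]

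For the second (residual) term, I would apply Cauchy--Schwarz pointwise in $\tau$: since $\sum_i \tau_i^2 = n$, for every $f$
\[
\frac{1}{n}\sum_{i=1}^n \tau_i\,(f(x_i) - \pi(f)(x_i)) \;\le\; \frac{1}{n}\sqrt{n}\cdot\sqrt{\sum_{i=1}^n (f(x_i)-\pi(f)(x_i))^2} \;=\; \rho_{\hat{\mu}_n}(f,\pi(f)) \;\le\; \epsilon,
\]
so after taking the supremum over $f$ and expectation over $\tau$, this piece contributes at most $\epsilon$.

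For the first (cover) term, I would invoke Massart's finite-class lemma: for any finite $A \subseteq \mathbb{R}^n$ with $R = \max_{a \in A}\|a\|_2$, one has $\expect_\tau[\sup_{a \in A} \sum_i \tau_i a_i] \le R\sqrt{2\log|A|}$. Taking $A = \{(g(x_1),\dots,g(x_n)) : g \in \mathcal{C}\}$, the radius is
\[
R \;=\; \max_{g \in \mathcal{C}} \sqrt{\sum_{i=1}^n g(x_i)^2} \;=\; \sqrt{n}\,\max_{g \in \mathcal{C}}\sqrt{\expect_{\hat{\mu}_n}[g^2]} \;\le\; \sqrt{n}\,\sup_{f \in \mathcal{F}}\sqrt{\expect_{\hat{\mu}_n}[f^2]},
\]
which gives that this term is at most $\bigl(\sup_{f \in \mathcal{F}}\sqrt{\expect_{\hat{\mu}_n}[f^2]}\bigr)\sqrt{2\log N / n}$. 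Adding the two bounds and taking $\inf_{\epsilon > 0}$ yields the claim.

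The only subtle point — and what I would flag as the main obstacle — is the choice of \emph{proper} cover. If one allows an improper cover (cover elements outside $\mathcal{F}$), then the radius $R$ in Massart's step need not be controlled by $\sup_{f \in \mathcal{F}}\sqrt{\expect_{\hat{\mu}_n}[f^2]}$. This is handled by observing that $\mathcal{N}(\epsilon,\mathcal{F},\rho_{\hat{\mu}_n})$ is at most $\mathcal{N}(\epsilon/2,\mathcal{F},\rho_{\hat{\mu}_n})$ computed with a proper cover (up to a factor absorbable in constants), or equivalently by defining $\mathcal{N}$ via proper covers throughout — a convention the paper implicitly uses. Everything else is bookkeeping: Massart's lemma, Cauchy--Schwarz, and translating between the vector $\ell_2$ norm on $\mathbb{R}^n$ and the empirical $L^2$ norm induced by $\hat{\mu}_n$.
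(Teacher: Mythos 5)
Your proof is correct and is exactly the standard argument the paper has in mind: decompose each $f$ into its nearest cover element plus a residual, bound the cover term via Massart's lemma, and bound the residual via Cauchy--Schwarz against $\rho_{\hat\mu_n}$. The ``subtle point'' you flag about proper versus improper covers is actually moot for this paper: its Definition of covering number explicitly requires $\Gcal' \subseteq \Gcal$, so $\mathcal{N}(\epsilon, \Fcal, \rho_{\hat\mu_n})$ is by construction a proper covering number and the radius in Massart's step is automatically controlled by $\sup_{f \in \Fcal}\sqrt{\expect_{\hat\mu_n}[f^2]}$, with no constant-factor bookkeeping needed.
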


Finally, we define the packing number for generic metric spaces. 
\begin{definition}[Packing Number] Let $(\Gcal, d)$ be a bounded metric space. A subset $\Gcal^{\prime} \subseteq \Gcal$ is an $\epsilon$-packing with respect to $d$ if for every $g_i, g_j \in \Gcal^{\prime}$ we have that $d(g_i, g_j) > \epsilon$. The packing number of $\Gcal$ at scale $\epsilon$, denoted $\mathcal{M}(\epsilon, \Gcal, d)$, is the largest $n \in \mathbbm{N}$ such that there exists an $\epsilon$-packing of $\Gcal$ with cardinality $n$. That is, $\mathcal{M}(\epsilon, \Gcal, d) := \sup \{|\mathcal{G}^{\prime}| : \mathcal{G}^{\prime} \text{ is an } \epsilon\text{-packing for } \Gcal\}.$
\end{definition}

\section{Helper Lemmas}

\begin{lemma}[Covering-Packing duality \citep{anthony_bartlett_1999}] \label{lem:covpackdual}  For any metric space $(\Gcal, d)$ and $\epsilon > 0$, we have that 
$$\Mcal(2\epsilon,\Gcal, d) \leq \Ncal(\epsilon,\Gcal, d) \leq \Mcal(\epsilon, \Gcal, d).$$
\end{lemma}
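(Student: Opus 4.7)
The plan is to prove the two inequalities separately by standard metric-geometry arguments, using the triangle inequality for the left bound and a maximality argument for the right bound.

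For the left inequality $\Mcal(2\epsilon,\Gcal, d) \leq \Ncal(\epsilon,\Gcal, d)$, I would fix an arbitrary $\epsilon$-cover $\Ccal \subseteq \Gcal$ of size $\Ncal(\epsilon,\Gcal,d)$ and any $2\epsilon$-packing $\Pcal \subseteq \Gcal$. Each $p \in \Pcal$ lies within distance $\epsilon$ of some $c_p \in \Ccal$ by the covering property. The key step is to show the map $p \mapsto c_p$ is injective: if two distinct packing points $p_1 \neq p_2$ shared a common cover element $c$, the triangle inequality would give $d(p_1, p_2) \leq d(p_1, c) + d(c, p_2) \leq 2\epsilon$, contradicting the packing condition $d(p_1, p_2) > 2\epsilon$. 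Injectivity implies $|\Pcal| \leq |\Ccal|$, and taking the supremum over packings and infimum over covers yields the bound.

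For the right inequality $\Ncal(\epsilon,\Gcal, d) \leq \Mcal(\epsilon, \Gcal, d)$, I would invoke the standard fact that a \emph{maximal} $\epsilon$-packing is automatically an $\epsilon$-cover. Concretely, let $\Pcal$ be an $\epsilon$-packing of maximum size $\Mcal(\epsilon,\Gcal,d)$ (existence in the infinite case can be handled by Zorn's lemma if needed, or by taking a supremum of finite packings; for the bounded/finite case this is immediate). For any $g \in \Gcal \setminus \Pcal$, maximality forces the existence of some $p \in \Pcal$ with $d(g, p) \leq \epsilon$, for otherwise $\Pcal \cup \{g\}$ would be a strictly larger $\epsilon$-packing. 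Hence $\Pcal$ itself is an $\epsilon$-cover, giving $\Ncal(\epsilon, \Gcal, d) \leq |\Pcal| = \Mcal(\epsilon, \Gcal, d)$.

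Neither step presents a real obstacle; the only subtlety is the strict versus non-strict inequality in the packing definition, which must be tracked carefully so that the triangle inequality argument in the first part yields a genuine contradiction (this is exactly why the factor $2\epsilon$ appears on the left-hand side rather than $\epsilon$). If $\Mcal(\epsilon,\Gcal,d) = \infty$, the right inequality is vacuous; if $\Ncal(\epsilon,\Gcal,d) = \infty$, the left inequality is vacuous, so the interesting content is entirely in the finite regime and both arguments above go through without modification.
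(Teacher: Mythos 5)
Your proof is correct and is the standard textbook argument that the cited reference (Anthony and Bartlett) uses; the paper itself states this as a known fact without proof. Both halves are handled properly: the injectivity-via-triangle-inequality argument for $\Mcal(2\epsilon,\Gcal,d)\le\Ncal(\epsilon,\Gcal,d)$ and the maximal-packing-is-a-cover argument for $\Ncal(\epsilon,\Gcal,d)\le\Mcal(\epsilon,\Gcal,d)$, and you correctly observe that when a quantity is infinite the corresponding inequality is vacuous. One small quibble: the parenthetical claim that the strict/non-strict distinction ``is exactly why the factor $2\epsilon$ appears'' conflates two separate issues — the $2\epsilon$ comes from the triangle inequality combining two $\epsilon$-balls, while the strict inequality $d(p_1,p_2)>2\epsilon$ in the packing definition is what ensures $d(p_1,p_2)\le 2\epsilon$ is a genuine contradiction; both are needed but they play different roles.
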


%\vspace{0.5cm}
Using the Covering-Packing duality, we prove the following technical Lemma.
\begin{lemma}[UBEME $\implies$ Bounded Metric Entropy with respect to $\mu$] \label{lem:ubemeimpliesbme} For any $\mu \in \Pi(\Xcal)$, hypothesis class $\Hcal \subseteq \Ycal^{\Xcal}$, and $\epsilon > 0$, we have that 
$$\mathcal{N}(\epsilon, \Hcal, d_{\mu}) \leq \sup_{m \in \mathbb{N}} \expect_{x_{1:m} \sim \mu^m}\left[ \mathcal{N}\left( \frac{\epsilon}{2}, \Hcal, d_{\hat{\mu}_m} \right) \right].$$

\begin{proof}
Fix $\epsilon >0$. By the Covering-Packing duality, we have that 

$$\sup_{m \in \mathbb{N}}\expect_{x_{1:m} \sim \mu^m}\left[ \Mcal(\epsilon, \Hcal, d_{\hat{\mu}_m})\right] \leq \sup_{m \in \mathbb{N}}\expect_{x_{1:m} \sim \mu^m}\left[\Ncal \left( \frac{\epsilon}{2},\Hcal, d_{\hat{\mu}_m} \right)\right].$$ 

Thus, it suffices to show that $\mathcal{N}(\epsilon, \Hcal,d_{\mu}) \leq \sup_{m \in \mathbb{N}}\mathbb{E}_{x_{1:m} \sim \mu^m}\left[ \Mcal(\epsilon, \Hcal, d_{\hat{\mu}_m})\right]$. Suppose, for the sake of contradiction, we have that $\mathcal{N}(\epsilon,\Hcal, d_{\mu}) > \sup_{m \in \mathbb{N}}\mathbb{E}_{x_{1:m} \sim \mu^m}\left[ \Mcal(\epsilon, \Hcal, d_{\hat{\mu}_m})\right]$. Then by the Covering-Packing duality, we have that $\Mcal( \epsilon,\Hcal, d_{\mu}) > \sup_{m \in \mathbb{N}}\mathbb{E}_{x_{1:m} \sim \mu^m}\left[ \Mcal(\epsilon, \Hcal, d_{\hat{\mu}_m})\right] =:c.$ By the definition of $\epsilon$-packing, we can find $n > c$  hypothesis $h_1, ..., h_n \in \Hcal$ and  $\delta > 0$ such that $d_{\mu}(h_i, h_j) > \epsilon + \delta$ for every $i \neq j$.

% We now claim that there exists an $m \in \mathbbm{N}$ and a sequence $(x_1, ..., x_m) \in \mathcal{X}^m$ such that $\Mcal(\Hcal, \epsilon, d_{\tau_m}) \geq n > c$ where $\tau_m$ is the empirical measure on $x_{1:m}$. 

Fix $m \in \mathbbm{N}$ and consider a sample $x_{1:m} \sim \mu^m$. Let $S_{ij} = \sum_{t=1}^m \mathbbm{1}\{h_i(x_t) \neq h_j(x_t)\}$ denote the random variable counting the number of samples on which  $h_i$ and $h_j$ differ. Note that $S_{ij}\sim \text{Binom}(m, d_{\mu}(h_i, h_j)).$ Let $E_m$ be the event that $S_{ij} > \epsilon \, m \text{ for all } i < j$.  Then, 

\begin{align*}
\mathbb{P}_{x_{1:m} \sim \mu^m} \Bigl[E_m\Bigl] &= 1 - \mathbb{P}_{x_{1:m} \sim \mu^m} \Bigl[\exists i < j \text{ such that } S_{ij} \leq \epsilon \, m \Bigl]\\
&\geq 1 - \sum_{i < j}\mathbb{P}_{x_{1:m} \sim \mu^m} \Bigl[S_{ij} \leq \epsilon \, m \Bigl]\\
&\geq 1 - \sum_{i < j} \exp \left\{-2m \left(d_{\mu}(h_i, h_j) - \epsilon \right)^2  \right\}\\
&\geq 1 - \sum_{i < j} \exp\{-2m \, \delta^2 \}\\
&\geq 1 - n^2 \exp\{-2m \, \delta^2 \},
\end{align*}

where the second inequality follows by Hoeffding's inequality. Moreover, under event $E_m$, we have that $\Mcal(\epsilon, \Hcal,d_{\hat{\mu}_m}) \geq n$, where $d_{\hat{\mu}_m}$ is the empirical measure on $x_{1:m}$. Finally, note that 

\begin{align*}
\sup_{m \in \mathbb{N}} \expect_{x_{1:m} \sim \mu^m}\left[\Mcal( \epsilon, \Hcal, d_{\hat{\mu}_m}) \right] &\geq \sup_{m \in \mathbb{N}} \expect_{x_{1:m} \sim \mu^m}\left[\Mcal( \epsilon, \Hcal, d_{\hat{\mu}_m}) | E_m\right] \mathbb{P}_{x_{1:m} \sim \mu^m} \Bigl[E_m\Bigl]\\
&\geq n \sup_{m \in \mathbb{N}} \mathbb{P}_{x_{1:m} \sim \mu^m} \Bigl[E_m\Bigl]\\
&\geq n \sup_{m \in \mathbb{N}} \Bigl(1 - n^2 \exp\{-2m \, \delta^2 \}\Bigl)\\
&= n.
\end{align*}

Since $n > c$ and  $c = \sup_{m \in \mathbbm{N}}\expect_{x_{1:m} \sim \mu^m}\left[ \Mcal(\epsilon, \Hcal, d_{\hat{\mu}_m})\right]$, we arrive at a contradiction. 
\end{proof}

\begin{lemma}[Covering Number of Symmetric Differences] \label{lem:covnumdis}
For any $\mathcal{H} \subseteq \Ycal^{\Xcal}$, $\epsilon > 0$, and sequence $x_1, ..., x_n \in \Xcal^n$, we have that 

$$\mathcal{N}(\epsilon, \Hcal \Delta \Hcal, \hat{\mu}_n) \leq \Bigl(\mathcal{N} \left(\frac{\epsilon}{2}, \Hcal, \hat{\mu}_n \right)\Bigl)^2$$

\noindent where $\hat{\mu}_n$ is the empirical measure on $x_{1:n}$ and $\Hcal \Delta \Hcal := \Bigl\{x \mapsto \mathbbm{1}\{h_1(x) \neq h_2(x)\} : h_1, h_2 \in \Hcal \Bigl\}.$
\end{lemma}

\begin{proof}

Fix $\epsilon > 0.$ Let $\Hcal^{\prime}$ be an $\epsilon$-cover for $\Hcal$ with respect to $d_{\hat{\mu}_n}$. It suffices to show that $\Hcal^{\prime} \Delta \Hcal^{\prime}$ is an $2\epsilon$-cover for $\Hcal \Delta \Hcal$ with respect to $d_{\hat{\mu}_n}$. To see this, pick a $g \in \Hcal \Delta \Hcal$. Then by definition, we can decompose $g(x) = \mathbbm{1}\{h_1(x) \neq h_2(x)\}$ for some $h_1, h_2 \in \Hcal$. Let $h_1^{\prime}, h_2^{\prime} \in \Hcal^{\prime}$ be the elements that cover $h_1, h_2$ respectively. Then, observe that we have

\begin{align*}
\frac{1}{n} \sum_{i=1}^n \mathbbm{1}\{\mathbbm{1}\{h_1^{\prime}(x_i) \neq h_2^{\prime}(x_i)\} \neq \mathbbm{1}\{h_1(x_i) \neq h_2(x_i)\}\} &\leq \frac{1}{n} \Bigl(\sum_{i=1}^n \mathbbm{1}\{h_1(x_i) \neq h_1^{\prime}(x_i)\} + \sum_{i=1}^n \mathbbm{1}\{h_2(x_i) \neq h_2^{\prime}(x_i)\}\Bigl)\\
&\leq 2 \epsilon.
\end{align*}

Thus, $x \mapsto \mathbbm{1}\{h_1^{\prime}(x) \neq h_2^{\prime}(x)\}$ is $2\epsilon$-close to $x \mapsto \mathbbm{1}\{h_1(x) \neq h_2(x_i)\}$. Since $x \mapsto \mathbbm{1}\{h_1^{\prime}(x) \neq h_2^{\prime}(x)\} \in \Hcal^{\prime} \Delta \Hcal^{\prime}$ and $h_1, h_2 \in \Hcal$ were chosen arbitrarily, it follows that $\Hcal^{\prime} \Delta \Hcal^{\prime}$ is a $2\epsilon$-cover for $\Hcal \Delta \Hcal$ with respect to $d_{\hat{\mu}_{n}}$. Finally, note that $|\Hcal^{\prime} \Delta \Hcal^{\prime}| \leq |\Hcal^{\prime}|^2.$ Since $\epsilon>0$ is arbitrary, this completes the proof. 
\end{proof}

\section{Proof of Theorem \ref{thm:quant_sep}}\label{app:quant_sep}
\emph{The proof here is similar to the proof of Theorem \ref{thm:separation}. Thus, we only provide a high-level sketch of the arguments here. }
\begin{proof}
    Fix $m \in \naturals$, and take $\Xcal = \{1, 2, \ldots, m^2\}$.  For every ordered sequence $(x_1, x_2, \ldots, x_m) \in \Xcal^{m}$ such that $x_i \neq x_j$ for all $i \neq j$ and a bitstring $\theta \in \{0,1\}^m$, define a hypothesis
\[h_{(x_1, \ldots, x_m) }^{\theta}(x) := \begin{cases} ((x_1, \ldots, x_m), \theta_{\leq t}) \quad \text{ if } \exists t \in [m] \text{ such that } x= x_t %x = x_t \in (x_1, \ldots, x_n)
\\
((x_1, \ldots, x_m), \star) \quad \quad  \text{ otherwise }
    
\end{cases}\]
% Define 
% \[\Hcal_{(x_1, x_2, \ldots, x_n)} = \left\{h_{(x_1, \ldots, x_n) }^{\sigma} \, :\, \sigma \in \{0,1\}^n \right \}.\]
 Let $O_m := \{(x_1, x_2, \ldots, x_m) \in \Xcal^{m} \, :\, x_i \neq x_j\}$ be the set of all ordered sequences of length $m$ with distinct elements. Then, we define our hypothesis class to be 
 \[\Hcal := \bigcup_{(x_1, \ldots, x_m ) \in O_m} \,\, \bigcup_{\theta \in \{0,1\}^m} \left\{h_{(x_1, \ldots, x_m) }^{\theta} \right\}. \]
 Here, the label space is $\Ycal := \cup_{h \in \Hcal} \{\text{image}(h)\}$. Thus, the size of the label space is 
    \[|\Ycal| = \binom{m^2}{m}\,\, m! \,\, 2^m =  \frac{m^2!}{(m^2-m)!}2^m = m^2 (m^2-1)\ldots (m^2-(m-1))\,  2^m \leq (m^2)^m 2^m \leq m^{3m},\]
when $m \geq 2$. This implies that
\[m \geq \frac{1}{3} \frac{\log{|\Ycal|}}{\log{\log{|\Ycal|}}}. \]

Note that the Proof of (i) in Theorem \ref{thm:separation} can be used verbatim to show that $\Hcal$ has a compression scheme of size $1$. Thus, $\Hcal$ is PAC learnable with error rate $ O\left( \sqrt{\frac{\log{n} \,+ \, \log(1/\delta)}{n}} \right)$.

Therefore, we now focus on establishing regret lowerbound for $\Hcal$. Let $\mu = \text{Uniform}(\{1,2, \ldots,m^2\})$. Let $m \leq T $. We now specify the stream $\{(x_t, y_t)\}_{t=1}^T$ to be observed by the learner. For the instances, we take $x_1, \ldots, x_m \sim \mu$ to be iid samples from $\mu$. Note that $x_1, \ldots, x_m$ are distinct with probability 
\[ \geq \left(1-\frac{m}{m^2} \right)^m = \left( 1-\frac{1}{m}\right)^m \geq \frac{1}{4} \quad \quad \forall m \geq 2.\]
 Moreover, as all the instances are drawn from the same distribution $\mu$, this adversary is $\sigma$-smooth for $\sigma =1$. To specify $y_1, \ldots, y_m$, we first draw $\theta \sim \text{Uniform}(\{0,1\}^{m})$ and define $y_{t} = ((x_1, \ldots, x_m), \theta_{\leq t})$ for all $t \in [m]$. Conditioned on the fact that $x_1, \ldots, x_m$ are distinct, there will be a hypothesis class $h_{\theta}^{*} \in \Hcal$ such that $h_{\theta}^{*} (x_t) =y_t$ for all $t \in [m]$.  For $m \leq t \leq T$, sample $x_t \sim \mu$ and define $y_t = h_{\theta}^{*}(x_t)$. Note that for \emph{any} $x_1, \ldots, x_m$, we have
\begin{equation*}
    \begin{split}
       \expect_{\theta \sim  \text{Uniform}(\{0,1\}^{m})}  \Bigg[\sum_{t=1}^{T} \expect_{\Acal}\, [\indicator\{\Acal(x_t) \neq y_t\}] \Bigg] \geq  \expect_{\theta \sim  \text{Uniform}(\{0,1\}^{m})}  \Bigg[\sum_{t=1}^{m} \expect_{\Acal}\, [\indicator\{\Acal(x_t) \neq y_t\}] \Bigg] \geq \frac{m}{2}. 
    \end{split}
\end{equation*}
Here, we use the fact that bitstrings $\theta_t$ are sampled uniformly randomly, so no algorithm can do better than randomly guessing. Moreover, conditioned on the event that $x_1, \ldots, x_m$ are distinct, we have 
\[\expect_{\theta \sim  \text{Uniform}(\{0,1\}^{m})} \left[\inf_{h \in \Hcal}\sum_{t=1}^T \indicator\{h(x_t) \neq y_t\} \right]\leq \expect_{\theta \sim  \text{Uniform}(\{0,1\}^{m})} \left[\sum_{t=1}^T \indicator\{h_{\theta}^{*}(x_t) \neq y_t\} \right] = 0. \]
Since $x_1, \ldots, x_m$ are distinct with probability at least $1/8$, we obtain
\[\inf_{\Acal}\, \operatorname{R}^{\mu, \sigma}_{\Acal}(T, \Hcal) \geq \frac{m}{8} \geq  \frac{1}{24}\, \frac{\log{|\Ycal|}}{\log{\log{|\Ycal|}}}.  \]
 
\end{proof}

\section{Proof of Theorem \ref{thm:suff}} \label{app:suff}
\begin{proof} Let $\nu_1, \ldots, \nu_T \in \, \operatorname{B}(\mu, \sigma)$ denote the sequence of $\sigma$-smooth distributions picked by the adversary. Fix a $\epsilon > 0$. Then, by Lemma \ref{lem:ubemeimpliesbme}, we have that $\mathcal{N}(\epsilon, \Hcal, d_{\mu}) \leq C_{\frac{\epsilon}{2}, \sigma}(\Hcal, \mu)$. Let $\Hcal^{\prime} \subset \Hcal$ denote an $\epsilon$-cover with respect to $d_{\mu}$ of size at most $C_{\frac{\epsilon}{2}, \sigma}(\Hcal, \mu)$. Let $\Acal$ denote the online learner that runs the Randomized Exponential Weights Algorithm (REWA) on the data stream $(x_1, y_1), ..., (x_T, y_T)$ using $\mathcal{H}^{\prime}$ as its set of experts. By the guarantees of REWA, 

\begin{align*}\expect_{\Acal}\left[\sum_{t=1}^T \mathbbm{1}\{\mathcal{A}(x_t) \neq y_t\} \right] &\leq \inf_{h^{\prime} \in \Hcal^{\prime}}\sum_{t=1}^T \mathbbm{1}\{h^{\prime}(x_t) \neq y_t\} + \sqrt{2 T \, \log(|\Hcal^{\prime}|)}\\
&\leq \inf_{h^{\prime} \in \Hcal^{\prime}}\sum_{t=1}^T \mathbbm{1}\{h^{\prime}(x_t) \neq y_t\} + \sqrt{2 T \, \log(C_{\frac{\epsilon}{2}, \sigma}(\Hcal, \mu))}\\
% &= \inf_{h \in \Hcal}\sum_{t=1}^T \mathbbm{1}\{h(x_t) \neq y_t\} + \sup_{h \in \Hcal}\inf_{h^{\prime} \in \Hcal^{\prime}}\sum_{t=1}^T \mathbbm{1}\{h^{\prime}(x_t) \neq y_t\} - \mathbbm{1}\{h(x_t) \neq y_t\} + \sqrt{2 T \, \log(C_{\frac{\epsilon}{2}, \sigma}(\Hcal, \mu))}\\
&\leq \inf_{h \in \Hcal}\sum_{t=1}^T \mathbbm{1}\{h(x_t) \neq y_t\} + \sup_{h \in \Hcal}\inf_{h^{\prime} \in \Hcal^{\prime}}\sum_{t=1}^T \mathbbm{1}\{h^{\prime}(x_t) \neq h(x_t)\} + \sqrt{2 T \, \log(C_{\frac{\epsilon}{2}, \sigma}(\Hcal, \mu))}
\end{align*}
where the expectation is only taken with respect to the randomness of the MWA and the last inequality follows by the triangle inequality. Taking an outer expectation with respect to the process $x_{1:T} \sim \nu_{1:T}$, we get that 
\begin{equation*}
    \begin{split}
        \mathbb{E}\Big[\sum_{t=1}^T &\mathbbm{1}\{\mathcal{A}(x_t) \neq y_t\} \Big] \\
        &\leq \mathbb{E}\left[ \inf_{h \in \Hcal}\sum_{t=1}^T \mathbbm{1}\{h(x_t) \neq y_t\}\right] + \mathbb{E}\left[ \sup_{h \in \Hcal}\inf_{h^{\prime} \in \Hcal^{\prime}}\sum_{t=1}^T \mathbbm{1}\{h^{\prime}(x_t) \neq h(x_t)\}\right] + \sqrt{2 T \, \log(C_{\frac{\epsilon}{2}, \sigma}(\Hcal, \mu))}.
    \end{split}
\end{equation*}
$$$$

It remains to bound $\mathbb{E}\left[ \sup_{h \in \Hcal}\inf_{h^{\prime} \in \Hcal^{\prime}}\sum_{t=1}^T \mathbbm{1}\{h^{\prime}(x_t) \neq h(x_t)\}\right]$. To do so, consider the class $\Gcal = \{x \mapsto \mathbbm{1}\{h^{\prime}(x) \neq h(x)\}: h \in \Hcal\}$, where $h^{\prime} \in \Hcal^{\prime}$ denotes the $\epsilon$-cover with respect to $d_{\mu}$ of $h$, and note that 

$$\mathbb{E}\left[ \sup_{h \in \Hcal}\inf_{h^{\prime} \in \Hcal^{\prime}}\sum_{t=1}^T \mathbbm{1}\{h^{\prime}(x_t) \neq h(x_t)\}\right] \leq \mathbb{E}\left[ \sup_{g \in \Gcal}\sum_{t=1}^T g(x_t)\right].$$

By standard symmetrization arguments, we get that 

\begin{align*}
\expect_{x_{1:T} \sim \nu_{1:T}}\left[ \sup_{g \in \Gcal} \Biggl(\sum_{t=1}^T g(x_t) - \mathbbm{E}_{x^{\prime}_{1:T} \sim \nu_{1:T}}\left[\sum_{t=1}^T g(x^{\prime}_t) \right] \Biggl)\right] &\leq \expect_{x_{1:T}, x^{\prime}_{1:T} \sim \nu_{1:T}}\left[ \sup_{g \in \Gcal}\sum_{t=1}^T \Bigl(g(x_t) -  g(x^{\prime}_t)\Bigl) \right]\\
&= \expect_{x_{1:T}, x^{\prime}_{1:T} \sim \nu_{1:T}}\left[ \mathbbm{E}_{\theta_{1:T}}\left[\sup_{g \in \Gcal}\sum_{t=1}^T \theta_t \Bigl(g(x_t) -  g(x^{\prime}_t)\Bigl) \right] \right]\\
&\leq 2 \expect_{x_{1:T} \sim \nu_{1:T}}\left[ \expect_{\theta_{1:T}}\left[\sup_{g \in \Gcal}\sum_{t=1}^T \theta_t \, g(x_t) \right] \right]\\
&\leq 2T \expect_{x_{1:T} \sim \nu_{1:T}}\left[\hat{\mathfrak{R}}(\Gcal, x_{1:T})\right]
\end{align*}

\noindent where $\theta_{1:T}$ are independent Rademacher random variables. Note that $\Gcal \subseteq \Hcal \Delta \Hcal$ where $\Hcal \Delta \Hcal := \{x \mapsto \mathbbm{1}\{h_1(x) \neq h_2(x)\} : h_1, h_2 \in \Hcal\}$, and thus $\hat{\mathfrak{R}}(\Gcal, x_{1:T}) \leq \hat{\mathfrak{R}}(\Hcal \Delta \Hcal, x_{1:T})$. Using Lemma \ref{lem:discbound} we can pointwise upperbound 

$$\hat{\mathfrak{R}}(\Hcal \Delta \Hcal, x_{1:T})\leq \epsilon + \sqrt{\frac{2 \log \Ncal(\epsilon, \Hcal \Delta \Hcal, \rho_{\mu_T})}{T}} \leq \epsilon + \sqrt{\frac{2 \log \Ncal(\epsilon^2, \Hcal \Delta \Hcal, d_{\mu_T})}{T}} \leq \epsilon + 2\sqrt{\frac{\log \Ncal(\frac{\epsilon^2}{2}, \Hcal, d_{\mu_T})}{T}}.$$

\noindent The first inequality follows by taking $\rho_{\mu_T}(g_1, g_2) = \sqrt{\frac{1}{T}\sum_{t=1}^T \mathbbm{1}\{g_1(x_t) \neq g_2(x_t)\}}$ for any two functions $g_1, g_2 \in \Hcal \Delta \Hcal$. The second inequality follows from the fact that $\Ncal(\epsilon, \Hcal \Delta \Hcal, \rho_{\mu_T}) \leq \Ncal(\epsilon^2, \Hcal \Delta \Hcal, d_{\mu_T}).$ The last inequality follows after using Lemma \ref{lem:covnumdis}. Plugging in the upperbound on the Rademacher complexity, we get that 

\begin{align*}
\expect_{x_{1:T} \sim \nu_{1:T}}\left[ \sup_{g \in \Gcal}\sum_{t=1}^T g(x_t) \right] &\leq \sup_{g \in \Gcal}\expect_{x^{\prime}_{1:T} \sim \nu_{1:T}}\left[\sum_{t=1}^T g(x^{\prime}_t) \right] + 2 \epsilon \, T + 4 \expect_{x_{1:T} \sim \nu_{1:T}}\left[\sqrt{T \, \log \Ncal \left(\frac{\epsilon^2}{2}, \Hcal, d_{\mu_T} \right)}\right]\\
&\leq \sup_{g \in \Gcal}\sum_{t=1}^T \expect_{x^{\prime}_{t} \sim \nu_{t}}\left[g(x^{\prime}_t) \right] + 2 \epsilon \, T + 4 \sqrt{T \, \log \expect_{x_{1:T} \sim \nu_{1:T}}\left[\Ncal \left(\frac{\epsilon^2}{2}, \Hcal, d_{\mu_T} \right)\right]}\\
&\leq \sup_{g \in \Gcal}\sum_{t=1}^T \expect_{x^{\prime}_{t} \sim \mu}\left[\frac{g(x^{\prime}_t)}{\sigma} \right] + 2 \epsilon \, T + 4 \sqrt{T \, \log \expect_{x_{1:T} \sim \nu_{1:T}}\left[\Ncal \left(\frac{\epsilon^2}{2}, \Hcal, d_{\mu_T} \right)\right]}\\
&\leq \frac{\epsilon T}{\sigma} + 2 \epsilon \, T + 4 \sqrt{T \, \log \expect_{x_{1:T} \sim \nu_{1:T}}\left[\Ncal \left(\frac{\epsilon^2}{2}, \Hcal, d_{\mu_T} \right)\right]},
\end{align*}

where the third and fourth inequality follow by change of measure, $\sigma$-smoothness, and the definition $\Hcal^{\prime}$ respectively. By the definition of $C_{\epsilon, \sigma}(\Hcal, \mu)$, we get that

$$\mathbb{E}\left[ \sup_{h \in \Hcal}\inf_{h \in \Hcal^{\prime}}\sum_{t=1}^T \mathbbm{1}\{h^{\prime}(x_t) \neq h(x_t)\}\right] \leq \frac{\epsilon T}{\sigma} + 2 \epsilon \, T + 4 \sqrt{T \, \log C_{\frac{\epsilon^2}{2}, \sigma}(\Hcal, \mu)},$$

implying that 

\begin{align*}
\mathbb{E}\left[\sum_{t=1}^T \mathbbm{1}\{\mathcal{A}(x_t) \neq y_t\} \right] &\leq \mathbb{E}\left[\inf_{h \in \Hcal}\sum_{t=1}^T \mathbbm{1}\{h(x_t) \neq y_t\}\right] + \frac{\epsilon T}{\sigma} + 2 \epsilon \, T + 4 \sqrt{T \, \log C_{\frac{\epsilon^2}{2}, \sigma}(\Hcal, \mu)} + \sqrt{2 T \, \log C_{\frac{\epsilon}{2}, \sigma}(\Hcal, \mu)}\\
&\leq \mathbb{E}\left[\inf_{h \in \Hcal}\sum_{t=1}^T \mathbbm{1}\{h(x_t) \neq y_t\}\right] + \frac{3\epsilon T}{\sigma} + 6\sqrt{T \, \log C_{\frac{\epsilon^2}{2}, \sigma}(\Hcal, \mu)}.
\end{align*}

Since $\nu_1, \ldots, \nu_T \in \, \operatorname{B}(\mu, \sigma)$ and $\epsilon > 0$ were chosen arbitrarily, we have that 

$$\operatorname{R}^{\mu, \sigma}_{\Acal}(T, \Hcal) \leq \inf_{\epsilon > 0} \left\{ \frac{3\epsilon T}{\sigma} + 6\sqrt{T \, \log C_{\frac{\epsilon^2}{2}, \sigma}(\Hcal, \mu)} \right\} \leq 6\inf_{\epsilon > 0} \left\{ \frac{\epsilon T}{\sigma} + \sqrt{T \, \log C_{\epsilon^2, \sigma}(\Hcal, \mu)} \right\}.$$
\end{proof}

\end{lemma}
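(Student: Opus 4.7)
The plan is to construct an explicit $\epsilon$-cover of $\Hcal \Delta \Hcal$ directly out of an $(\epsilon/2)$-cover of $\Hcal$. Concretely, I would fix a minimum-size $(\epsilon/2)$-cover $\Hcal^{\prime} \subseteq \Hcal$ of $\Hcal$ with respect to $d_{\hat{\mu}_n}$, and show that the pairwise-disagreement set
\[
\Hcal^{\prime} \Delta \Hcal^{\prime} := \{ x \mapsto \indicator\{h_1^{\prime}(x) \neq h_2^{\prime}(x)\} : h_1^{\prime}, h_2^{\prime} \in \Hcal^{\prime}\}
\]
is an $\epsilon$-cover of $\Hcal \Delta \Hcal$ under $d_{\hat{\mu}_n}$. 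Since $|\Hcal^{\prime} \Delta \Hcal^{\prime}| \leq |\Hcal^{\prime}|^2$, this immediately yields the claimed bound $\mathcal{N}(\epsilon, \Hcal \Delta \Hcal, \hat{\mu}_n) \leq \big(\mathcal{N}(\epsilon/2, \Hcal, \hat{\mu}_n)\big)^2$.

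For the cover property, I would take an arbitrary $g \in \Hcal \Delta \Hcal$, write $g(x) = \indicator\{h_1(x) \neq h_2(x)\}$ with $h_1, h_2 \in \Hcal$, pick $h_1^{\prime}, h_2^{\prime} \in \Hcal^{\prime}$ that $(\epsilon/2)$-cover $h_1, h_2$ respectively, and let $g^{\prime}(x) = \indicator\{h_1^{\prime}(x) \neq h_2^{\prime}(x)\} \in \Hcal^{\prime} \Delta \Hcal^{\prime}$. The crux is the elementary pointwise Boolean inequality
\[
\indicator\{\indicator\{a \neq b\} \neq \indicator\{c \neq d\}\} \leq \indicator\{a \neq c\} + \indicator\{b \neq d\},
\]
which holds by a quick case check: if the two inner indicators disagree, then at least one of $a \neq c$ or $b \neq d$ must occur, since otherwise $(a,b) = (c,d)$ forces the inner indicators to agree. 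Applying this with $a = h_1(x_i)$, $b = h_2(x_i)$, $c = h_1^{\prime}(x_i)$, $d = h_2^{\prime}(x_i)$ and averaging over $i \in [n]$ gives
\[
d_{\hat{\mu}_n}(g, g^{\prime}) \leq d_{\hat{\mu}_n}(h_1, h_1^{\prime}) + d_{\hat{\mu}_n}(h_2, h_2^{\prime}) \leq \tfrac{\epsilon}{2} + \tfrac{\epsilon}{2} = \epsilon,
\]
as required.

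There is no real obstacle here: the result is essentially a triangle inequality for the symmetric-difference operation on $\{0,1\}$-valued outputs, combined with the union-bound style inequality above. The only point worth stating cleanly is that Boolean inequality, which is precisely what permits lifting a cover on the base class $\Hcal$ to a cover on the pairwise-disagreement class $\Hcal \Delta \Hcal$ at the cost of squaring the cardinality and halving the resolution.
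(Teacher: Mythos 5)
Your proposal proves a different statement from the one in question. What you establish is the symmetric-difference covering bound $\mathcal{N}(\epsilon, \Hcal\Delta\Hcal, d_{\hat{\mu}_n}) \leq \bigl(\mathcal{N}(\epsilon/2,\Hcal,d_{\hat{\mu}_n})\bigr)^2$, which is precisely Lemma \ref{lem:covnumdis}, and your Boolean-inequality argument for it essentially coincides with the paper's proof of that lemma. But the statement at hand is Lemma \ref{lem:ubemeimpliesbme}: $\mathcal{N}(\epsilon,\Hcal,d_{\mu}) \leq \sup_{m \in \mathbb{N}} \mathbb{E}_{x_{1:m}\sim\mu^m}\bigl[\mathcal{N}(\epsilon/2,\Hcal,d_{\hat{\mu}_m})\bigr]$. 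Here both sides involve the same class $\Hcal$ and no symmetric-difference class appears; the entire difficulty is in comparing the \emph{population} metric $d_{\mu}$ with the \emph{random empirical} metrics $d_{\hat{\mu}_m}$ induced by iid draws from $\mu$. A pointwise combinatorial inequality of the kind you use cannot bridge these two metrics, so your argument leaves the actual claim untouched.

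For comparison, the paper's proof is probabilistic: by the covering--packing duality (Lemma \ref{lem:covpackdual}) it suffices to show $\mathcal{N}(\epsilon,\Hcal,d_{\mu}) \leq \sup_m \mathbb{E}[\Mcal(\epsilon,\Hcal,d_{\hat{\mu}_m})]$. Supposing this fails, duality yields $n > c := \sup_m \mathbb{E}[\Mcal(\epsilon,\Hcal,d_{\hat{\mu}_m})]$ hypotheses $h_1,\ldots,h_n$ and a $\delta>0$ with $d_{\mu}(h_i,h_j) > \epsilon+\delta$ for all $i\neq j$. For $x_{1:m}\sim\mu^m$, each pairwise disagreement count $S_{ij}$ is $\text{Binom}(m, d_{\mu}(h_i,h_j))$, so Hoeffding's inequality gives that with probability at least $1 - n^2 e^{-2m\delta^2}$ every $S_{ij} > \epsilon m$, on which event $\Mcal(\epsilon,\Hcal,d_{\hat{\mu}_m}) \geq n$; letting $m\to\infty$ forces $\sup_m \mathbb{E}[\Mcal(\epsilon,\Hcal,d_{\hat{\mu}_m})] \geq n > c$, a contradiction, and duality converts the packing bound back to a covering number at scale $\epsilon/2$. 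Some concentration step of this kind is indispensable to the statement you were asked to prove; to repair your attempt you would need to abandon the cover-lifting construction and instead argue how separation under $d_{\mu}$ survives, with high probability, under the empirical metrics $d_{\hat{\mu}_m}$.
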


\section{Proof of Corollary \ref{cor:graph}} \label{app:graph}

The following lemma from \cite{haghtalab2018foundation}, which uses the seminal packing lemma by \cite{haussler1995sphere}, will be useful.

\begin{lemma}[\cite{haussler1995sphere, haghtalab2018foundation}] \label{lem:hausspack} For any $\Hcal \subseteq \{0, 1\}^{\Xcal}$ and $\mu \in \Pi(\Xcal)$, we have that 

$$\mathcal{N}(\epsilon, \Hcal, d_{\mu}) \leq \Bigl(\frac{41}{\epsilon}\Bigl)^{\operatorname{VC}(\Hcal)}.$$
\end{lemma}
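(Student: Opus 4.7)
My plan is to bound the covering number via the packing number by the covering-packing duality (Lemma \ref{lem:covpackdual}), and then control the packing number through a probabilistic argument that combines the Sauer-Shelah lemma with a concentration estimate on a random sample drawn from $\mu$. Let $d = \operatorname{VC}(\Hcal)$ and fix a maximal $\epsilon$-packing $\{h_1, \ldots, h_N\}$ of $\Hcal$ with respect to $d_\mu$, so that $d_\mu(h_i, h_j) > \epsilon$ for every $i \neq j$. Since $\Ncal(\epsilon, \Hcal, d_\mu) \leq \Mcal(\epsilon, \Hcal, d_\mu) = N$ by the duality, it suffices to prove $N \leq (41/\epsilon)^d$.

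The first main step is a random-sampling argument. Draw $x_1, \ldots, x_n \sim \mu$ i.i.d.\ for a size $n$ to be chosen later. For any fixed pair $i \neq j$, the probability that $h_i(x_t) = h_j(x_t)$ for every $t \in [n]$ is at most $(1-\epsilon)^n \leq e^{-\epsilon n}$, because the disagreement probability under $\mu$ strictly exceeds $\epsilon$. A union bound over the $\binom{N}{2}$ pairs shows that with probability at least $1 - \binom{N}{2} e^{-\epsilon n}$, the hypotheses $h_1, \ldots, h_N$ induce $N$ pairwise distinct dichotomies on the random sample. The second step is the Sauer-Shelah lemma: on any fixed sample of size $n$, the number of distinct dichotomies realized by $\Hcal$ is at most $\sum_{i=0}^d \binom{n}{i} \leq (en/d)^d$ whenever $n \geq d$. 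Whenever $\binom{N}{2} e^{-\epsilon n} < 1$, the above sampling event has positive probability, so we may conclude that $N \leq (en/d)^d$.

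Finally, I combine the two steps by optimizing $n$. Choosing $n$ of order $\epsilon^{-1} \log N$ makes the failure probability smaller than one, and substituting into the Sauer-Shelah bound yields a self-referential inequality of the form $N \leq (C \log N / (\epsilon d))^d$, which can be solved to produce $N \leq (C'/\epsilon)^d$ for some absolute constant $C'$. The main obstacle is pinning this constant down to exactly $41$: the naive union bound above introduces a spurious $\log N$ factor that inflates the constant. Haussler's refined argument replaces the crude union bound with a double-sampling and random-permutation symmetrization, which shows directly that the expected number of distinct dichotomies on a random sample of the right size is comparable to $N$, thereby eliminating the logarithmic factor. A careful arithmetic balance between $n$, the Sauer-Shelah growth function for both small and large $d$, and the use of the sharper inequality $\binom{n}{\leq d} \leq (en/d)^d$, then yields the explicit constant $41$ in the stated bound.
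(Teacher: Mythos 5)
The paper does not prove Lemma \ref{lem:hausspack}; it imports it verbatim as a known result from \cite{haussler1995sphere} and \cite{haghtalab2018foundation}, so there is no internal proof to compare your argument against.

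As for your proposed argument, the first half (covering-packing duality, union bound over pairs of $\epsilon$-separated hypotheses on an i.i.d.\ sample, Sauer--Shelah) is the correct set of ingredients, but your claim that the self-referential inequality $N \leq \bigl(C\log N/(\epsilon d)\bigr)^d$ ``can be solved to produce $N \leq (C'/\epsilon)^d$ for some absolute constant $C'$'' is not correct. Unwinding that inequality yields a bound of the form $N \leq \bigl(C\log(1/\epsilon)/\epsilon\bigr)^d$, i.e., there is a genuine logarithmic loss in $1/\epsilon$, not merely an inflated absolute constant. Concretely, writing $N = (K/\epsilon)^d$ and feeding this back into the Sauer--Shelah bound with $n \approx \epsilon^{-1}\log N$ gives $K \leq e\log(K/\epsilon)$, which places no uniform-in-$\epsilon$ bound on $K$. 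So the passage from the naive union bound to a clean $(C'/\epsilon)^d$ bound is the entire content of Haussler's result, and cannot be recovered by ``solving'' the naive inequality. You correctly flag that Haussler's double-sampling and random-permutation (chaining) argument is what removes this factor and yields the constant $41$, but that argument is left entirely as a black box, so the proposal as written does not constitute a proof of the lemma as stated -- it establishes only the weaker $\bigl(C\log(1/\epsilon)/\epsilon\bigr)^{\operatorname{VC}(\Hcal)}$ bound. For the purposes of this paper, however, citing the packing lemma without reproof (as the authors do) is standard.
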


Lemma \ref{lem:hausspack} together with Definition \ref{def: graph} implies that for any multiclass hypothesis class $\Hcal \subseteq \Ycal^{\Xcal}$, we have that 

$$\sup_{\tilde{\mu} \in \Pi(\Xcal \times \Ycal)}\mathcal{N}(\epsilon, \ell \circ \Hcal, d_{\tilde{\mu}}) \leq \Bigl(\frac{41}{\epsilon}\Bigl)^{\operatorname{G}(\Hcal)},$$

where $\ell \circ \Hcal := \{(x, y) \mapsto \mathbbm{1}\{h(x) \neq y\}: h \in \Hcal\} \subseteq \{0, 1\}^{(\Xcal \times \Ycal)}$ denotes the loss class of $\Hcal$. We now begin the proof of Corollary \ref{cor:graph}. 

\begin{proof}(of Corollary  \ref{cor:graph}) It suffices to show that 

$$C_{\epsilon, \sigma}(\Hcal, \mu) \leq \sup_{\tilde{\mu} \in \Pi(\Xcal \times \Ycal)}\mathcal{N} \left(\frac{2\epsilon}{|\Ycal|}, \ell \circ \Hcal, d_{\tilde{\mu}} \right),$$

\noindent for every $\epsilon, \sigma > 0.$ Fix $\epsilon, \sigma > 0$. Recall that

$$C_{\epsilon, \sigma}(\Hcal, \mu) := \sup_{n \in \mathbbm{N}} \sup_{\nu_{1:n} \in \operatorname{B}(\mu, \sigma)} \expect_{x_{1:n} \sim \nu_{1:n}}\left[ \mathcal{N}(\epsilon, \Hcal, d_{\hat{\mu}_n})\right],$$

\noindent where $\hat{\mu}_n$ denotes the empirical measure over $x_{1:n}.$ We will actually show something stronger, that is

\begin{equation}\label{eq:hauss}
\sup_{n \in \mathbbm{N}} \sup_{x_{1:n} \in \Xcal^n}\mathcal{N}(\epsilon, \Hcal, d_{\hat{\mu}_n}) \leq \sup_{\tilde{\mu} \in \Pi(\Xcal \times \Ycal)}\mathcal{N} \left( \frac{2\epsilon}{|\Ycal|}, \ell \circ \Hcal, d_{\tilde{\mu}} \right).
\end{equation}

Fix $n \in \mathbb{N}$ and let $c := \sup_{\tilde{\mu} \in \Pi(\Xcal \times \Ycal)}\mathcal{N}(\frac{2\epsilon}{|\Ycal|},\ell \circ \Hcal,  d_{\tilde{\mu}})$. To see why Inequality (\ref{eq:hauss}) is true, consider a sequence $x_{1:n} \in \Xcal^{n}$ and let $\hat{\mu}_n$ denote the empirical measure on $x_{1:n}$. Let $\tilde{\mu}_n \in \Pi(\Xcal \times \Ycal)$ be the joint measure over $\Xcal \times \Ycal$ defined procedurally by first sampling $x \in \hat{\mu}_n$ and then sampling the label $y \sim \operatorname{Uniform}(\Ycal).$ Since $\tilde{\mu}_n \in \Pi(\Xcal \times \Ycal)$, by definition of $c$, there exists a subset $\Hcal^{\prime} \subseteq \Hcal$ of size at most $c$ such that $\ell \circ \Hcal^{\prime}$ is an $\frac{2\epsilon}{|\Ycal|}$-cover for $\ell \circ \Hcal$ with respect to $\tilde{\mu}_n.$ It suffices to show that $\Hcal^{\prime}$ is an $\epsilon$-cover for $\Hcal$ with respect to $\hat{\mu}_n.$ Fix a $h \in \Hcal$ and let $h^{\prime}$ be the element in $\Hcal^{\prime}$ such that $d_{\tilde{\mu}_n}(\ell \circ h, \ell \circ h^{\prime}) \leq \frac{2 \epsilon}{|\Ycal|}.$ Then, by definition, we have that 

\begin{align*}
\frac{2 \epsilon}{|\Ycal|} &\geq d_{\tilde{\mu}_n}(\ell \circ h, \ell \circ h^{\prime})\\
&= \expect_{(x, y) \sim \tilde{\mu}_n}\left[\mathbbm{1}\{\ell \circ h(x, y) \neq \ell \circ h^{\prime}(x, y)\} \right]\\
&= \frac{1}{n}\sum_{i=1}^n \frac{1}{|\Ycal|}\sum_{j=1}^{|\Ycal|} \mathbbm{1}\{\mathbbm{1}\{h(x_i) \neq j\} \neq \mathbbm{1}\{h^{\prime}(x_i) \neq j\}\}\\
&= \frac{2}{n|\Ycal|}\sum_{i=1}^n \mathbbm{1}\{h(x_i) \neq h^{\prime}(x_i)\}\\
&= \frac{2}{|\Ycal|} d_{\hat{\mu}_n}(h, h^{\prime}).
\end{align*}

Therefore $h^{\prime}$ is $\varepsilon$-close to $h$ with respect to $d_{\hat{\mu}_n}$. Since $h$ was chosen arbitrarily, this is true for all $h \in \Hcal$. Accordingly, $\Hcal^{\prime}$ is an $\epsilon$-cover for $\Hcal$ with respect to $d_{\hat{\mu}_n},$ implying that $\mathcal{N}(\epsilon, \Hcal, d_{\hat{\mu}_n}) \leq c$. Since $n$ and $x_{1:n}$ was also chosen arbitrarily, we have that $\sup_{n \in \mathbbm{N}} \sup_{x_{1:n} \in \Xcal^n}\mathcal{N}( \epsilon, \Hcal, d_{\hat{\mu}_n}) \leq c,$ completing the proof.

Finally, upperbound $6 \inf_{\epsilon > 0} \Biggl\{ \frac{\epsilon T}{\sigma} + \sqrt{T \, \operatorname{G}(\Hcal) \, \log \Bigl(\frac{41 |\Ycal|}{\epsilon^2}\Bigl)} \Biggl\} \leq 12 \sqrt{T \, \operatorname{G}(\Hcal) \, \log \Bigl(\frac{41 \, T \, |\Ycal|}{\sigma^2}\Bigl)}$ follows by picking $\epsilon = \frac{\sigma}{\sqrt{T}}.$
\end{proof}

\end{document}